\documentclass[twoside,11pt]{article}

\usepackage[preprint]{jmlr2e}

\usepackage[utf8]{inputenc}
\usepackage[T1]{fontenc}
\usepackage{amsmath}
\usepackage{amsfonts}
\usepackage{amssymb}
\usepackage{algorithm}
\usepackage{algpseudocode}
\usepackage{array}
\usepackage{booktabs}
\usepackage{multirow}
\usepackage{nicefrac}
\usepackage{microtype}
\usepackage{xcolor}
\usepackage{pifont}
\usepackage{wrapfig}
\usepackage{caption}
\usepackage{enumitem}
\usepackage{lastpage}
\usepackage{url}
\usepackage{graphicx}
\usepackage{mathtools}

\newtheorem{assumption}[theorem]{Assumption}
\providecommand{\qedsymbol}{$\blacksquare$}
\renewcommand{\qedsymbol}{$\blacksquare$}

\jmlrheading{1}{2026}{1-\pageref{LastPage}}{6/26}{--}{--}{Jiawei Tang, Yuheng Jia, Xinyan Du, Hui Liu, and Junhui Hou}
\ShortHeadings{Variational Inference for Evidential Deep Learning}{Tang et al.}
\firstpageno{1}

\begin{document}

\title{Variational Inference for Evidential Deep Learning}

\author{\name Jiawei Tang \email jw\_tang@seu.edu.cn \\
       \addr School of Computer Science and Engineering\\
       Southeast University\\
       Nanjing 210096, China
              \AND
       \name Yuheng Jia \email yhjia@cseu.edu.cn \\
       \addr School of Computer Science and Engineering\\
       Southeast University\\
       Nanjing 210096, China
       \AND
       \name Xinyan Du \email dxy-sjcx@seu.edu.com \\
       \addr School of Computer Science and Engineering\\
       Southeast University\\
       Nanjing 210096, China
       \AND
       \name Hui Liu \email h2liu@sfu.edu.hk \\
       \addr School of Computing Information Sciences\\
       Saint Francis University\\
       Hong Kong SAR, China
       \AND
       \name Junhui Hou \email jh.hou@cityu.edu.hk \\
       \addr Department of Computer Science\\
       City University of Hong Kong\\
       Hong Kong SAR, China
}

\maketitle

\begin{abstract}
While deep neural networks achieve remarkable performance, their tendency to produce overconfident predictions remains a critical issue. Evidential deep learning (EDL) mitigates this by modeling predictions as a Dirichlet distribution over class probabilities, thereby explicitly quantifying epistemic uncertainty. However, we identify two fundamental limitations in conventional EDL: first, its Kullback--Leibler penalty only suppresses the evidence of negative classes, which produces excessively strong evidence and consequently degrades the model's ability to quantify uncertainty; second, it lacks a theoretical guarantee for setting the Dirichlet parameter $\boldsymbol{\alpha} = \mathbf{e} + \mathbf{1}$. In this paper, we propose a mathematically principled framework, called Variational Inference EDL (VI-EDL). By reformulating evidential learning through the lens of variational inference, we derive an Evidence Lower Bound that prevents the evidence from growing excessively. Furthermore, we rigorously establish a generalization bound, revealing how predicted uncertainty, feature representations, and network complexity affect this bound, and proving why setting $\boldsymbol{\alpha} = \mathbf{e} + \mathbf{1}$  minimizes it. Extensive experiments on standard visual and medical datasets demonstrate that VI-EDL achieves state-of-the-art performance, exhibiting superior performance in out-of-distribution detection, noise detection, and autonomous driving scenarios. The code is available at \url{https://github.com/seutjw/VI-EDL}.
\end{abstract}

\begin{keywords}
  Evidential Deep Learning, Variational Inference, Uncertainty Quantification, Out-of-Distribution Detection
\end{keywords}

\section{Introduction}

Deep Neural Networks (DNNs) have achieved success in a wide range of predictive tasks \citep{DNN1, DNN2}. However, their tendency to produce overconfident point-estimates \citep{of1, of2}---even when encountering out-of-distribution (OOD) samples \citep{ood1, ood2, ood3, ood4, ood5}---remains a critical bottleneck for deployment in safety-critical domains like autonomous driving \citep{drive1, drive2, drive3} and biomedical applications \citep{med1}. While Bayesian Neural Networks \citep{BNN} and MC-Dropout \citep{MC} offer some solutions, they typically require computationally expensive multiple sampling. To address this, Evidential Deep Learning (EDL) \citep{EDL} has emerged as a promising paradigm. Instead of applying a standard softmax function to output deterministic probabilities, EDL restructures the prediction pipeline into an evidence-based chain. Specifically, EDL reformulates the standard classification paradigm by estimating a class probability distribution rather than providing a single point estimate. 

EDL models the predicted class probability distribution as a Dirichlet distribution \citep{SL, DS}. Given an input $\mathbf{x}$, a neural network $f_\theta$ is employed to extract non-negative evidence vector $\mathbf{e} = [e_1, \dots, e_K] \geq 0$ for $K$ classes, typically utilizing an activation function such as Softplus. This evidence vector is then used to parameterize the Dirichlet distribution $\text{Dir}(\mathbf{p} | \boldsymbol{\alpha})$, where $\boldsymbol{\alpha}$ is the Dirichlet parameters $\boldsymbol{\alpha}=[\alpha_1,...,\alpha_K]$ with each element defining as $\alpha_k = e_k + 1, \forall k\in[1,2,...,K]$ and the expected class probability vector $\mathbf{p}=[\hat{p}_1,...,\hat{p}_K]$. Accordingly, the uncertainty is quantified by $u = K/S$ and the expected class probabilities are $\hat{p}_k = \alpha_k / S$, where $k\in[1,2,...,K]$ and $S = \sum_{k=1}^K \alpha_k$.

To train the model, standard EDL minimizes an expected Mean Squared Error (MSE) risk. This term measures the expected discrepancy between the ground-truth label vector $\mathbf{y}=[y_1,...,y_K]\in\{0,1\}^K$ and the expected class probability vector $\mathbf{p}$. By evaluating the expectation over the simplex, this objective implicitly minimizes the variance of the predictions. Furthermore, to prevent the model from generating misleading evidence for incorrect classes, a Kullback-Leibler (KL) divergence term is introduced. It penalizes the divergence between the predicted distribution (excluding the evidence of the true class) and a flat dirichlet distribution $\text{Dir}(\mathbf{p} | \mathbf{1})$, where $\mathbf{1}$ represents an all-one vector.

The overall loss function of standard EDL is formulated as:
\begin{equation}
\begin{aligned}
    \mathcal{L}_{EDL} =& \underbrace{\mathbb{E}_{\mathbf{p} \sim \text{Dir}(\boldsymbol{\alpha})} \left[ \| \mathbf{y} - \mathbf{p} \|_2^2 \right]}_{\text{Expected MSE}} +                                                         \lambda_t \cdot \underbrace{D_{KL} \left( \text{Dir}(\mathbf{p} | \tilde{\boldsymbol{\alpha}}) \parallel \text{Dir}(\mathbf{p} | \mathbf{1}) \right)}_{\text{KL divergence penalty}},
\end{aligned}
\end{equation}
where $\lambda_t$ is an annealing coefficient that gradually ramps up the KL penalty during the initial training epochs, and $\tilde{\boldsymbol{\alpha}} = \mathbf{y} + (\mathbf{1} - \mathbf{y}) \odot \boldsymbol{\alpha}$ represents the modified Dirichlet parameters after masking out the true class evidence, where $\mathbf{1}\in\mathbb{R}^K$ represents an all-one vector.

Despite its conceptual elegance, the conventional EDL framework suffers from two drawbacks:

\begin{itemize}[leftmargin=*, nosep]
    \item{In EDL, the KL divergence term only suppresses the evidence of negative classes while leaving the positive class unconstrained. This inherently motivates the network to take an optimization shortcut, blindly inflating feature and weight magnitudes to produce excessively high target evidence. Driven by the MSE term, the network tends to act as a numerical amplifier and inevitably maps even non-semantic random noise to massive spurious evidence. This directly decreases the model's ability to quantify uncertainty.}
    \item{EDL maps the generated evidence $\mathbf{e}$ to the Dirichlet parameters $\boldsymbol{\alpha}$ using a heuristically defined function $\boldsymbol{\alpha} = \mathbf{e} + \mathbf{1}$. Despite its effectiveness empirically, this formulation lacks a principled mathematical justification.}
\end{itemize}

In this work, we propose resolving these two limitations by reconstructing the EDL framework in a probabilistic framework. We transform the evidence extraction process into a variational inference (VI) \citep{VI} problem and derive a rigorous Evidence Lower Bound (ELBO) \citep{VAE}, organically restoring the Bayesian legitimacy of the EDL paradigm. Moreover, the form of the ELBO also naturally dictates a global regularization across all classes, structurally preventing the magnitude-induced overconfidence. In addition, we also design a cosine prototype layer for the evidence network to further control the magnitude of generated evidence. 

Furthermore, our framework provides a definitive theoretical closure to the heuristic $\boldsymbol{\alpha} = \mathbf{e} + \mathbf{1}$ mapping. By imposing a generalized Dirichlet prior $\text{Dir}(\boldsymbol{\lambda})$ in the probabilistic model, our variational formulation rigorously derives the posterior parameterization as $\boldsymbol{\alpha} = \mathbf{e} + \boldsymbol{\lambda}$ via Bayesian conjugate updating. Consequently, we conduct a rigorous generalization bound analysis, theoretically proving that setting the prior parameter to $\boldsymbol{\lambda} = \mathbf{1}$ optimally minimizes the generalization error bound. This finding retrospectively validates the empirical design of standard EDL while firmly grounding it in statistical learning theory.

The main contributions of this work are summarized as follows:
\begin{itemize}
    \item 
    We provide a novel probabilistic derivation of the evidential objective using Variational Inference (VI-EDL). By formulating the evidence generation as a conjugate update, we replace heuristic terms with a mathematically rigorous ELBO, which guarantees the theoretical soundness and interpretability of our model. Along with the cosine prototype layer, it also prevents the evidence of all classes from growing excessively.
    \item 
    We analytically derive a generalization bound for our proposed model and provide several theoretical insights in Section~\ref{insights}. Specifically, we show that setting $\boldsymbol{\alpha} = \mathbf{e} + \mathbf{1}$ minimizes the bound (\hyperref[insight:alpha]{\textbf{Insight 2}}), and further reveal how the predicted uncertainty (\hyperref[insight:uncertainty]{\textbf{Insight 3}}), feature complexity, and network complexity (\hyperref[insight:complexity]{\textbf{Insight 4}}) affect this bound. Therefore, we fill a fundamental theoretical gap in standard EDL.
    \item 
    We evaluate our framework on various benchmarks, including standard visual datasets and medical datasets. Experimental results demonstrate that VI-EDL significantly outperforms state-of-the-art evidential baselines, achieving superior performance in out-of-distribution (OOD) detection, noise detection, and autonomous driving scenarios.
\end{itemize}

The remainder of this paper is organized as follows. Section \ref{sec:related_work} briefly reviews the related literature on uncertainty estimation and evidential deep learning. Section \ref{sec:method} elaborates on the proposed Variational Inference Evidential Deep Learning (VI-EDL) framework, including the derivation of the ELBO and the design of the cosine prototype layer. Section \ref{sec:theory} provides a rigorous theoretical analysis of our approach, establishing a generalization bound and delving into its insights. Section \ref{sec:experiments} presents extensive experimental results, ablation studies, and robustness analyses across multiple benchmarks to demonstrate the superiority of the proposed method. Finally, Section \ref{sec:conclusion} concludes this paper and outlines potential avenues for future research.

\section{Related Work}
\label{sec:related_work}
Deep neural networks often produce overconfident predictions even when their accuracy is high, especially under distribution shift or on semantically invalid inputs \citep{of1,of2}. This has motivated a broad line of research on predictive uncertainty estimation and confidence calibration. Bayesian neural networks explicitly place distributions over network parameters, but usually require approximate inference and additional computational cost \citep{BNN}. MC-Dropout interprets dropout as a Bayesian approximation and estimates uncertainty by multiple stochastic forward passes \citep{MC}. Deep ensembles provide a simple and strong alternative by aggregating independently trained models, yielding competitive calibration and OOD robustness at the cost of multiple model evaluations \citep{DE}. In parallel, OOD detection has been extensively studied as a way to identify test samples that deviate from the training distribution \citep{ood2}. Representative approaches use maximum softmax confidence scores \citep{ood1}, temperature-scaled confidence with input perturbations \citep{ood5}, feature-space distances \citep{ood3}, or energy-based scores \citep{ood4}. Although these methods improve reliability, many of them either require post-hoc calibration, multiple forward passes, or auxiliary scoring functions. In contrast, evidential models aim to quantify uncertainty deterministically in a single forward pass.

Evidential Deep Learning (EDL) provides a deterministic uncertainty quantification framework by mapping neural network outputs to the parameters of a Dirichlet distribution over class probabilities \citep{EDL}. Built on Subjective Logic \citep{SL} and Dempster--Shafer theory \citep{DS}, EDL interprets non-negative network outputs as class-wise evidence and estimates epistemic uncertainty through the total Dirichlet strength. Following this paradigm, several variants have been proposed to improve uncertainty estimation and training stability. I-EDL introduces Fisher information to assess the informativeness of evidence and dynamically reweight learning objectives \citep{IEDL}. Re-EDL revisits essential and nonessential design choices in EDL, treating the Dirichlet prior weight as an adjustable hyperparameter and simplifying the optimization objective \citep{REEDL}. F-EDL further extends the Dirichlet assumption by introducing a flexible Dirichlet distribution to obtain a more expressive uncertainty representation \citep{FEDL}.

Beyond methodological developments, evidential models have been applied to medical image analysis and retrieval \citep{EDL_Medical_Seg,EDL_Medical_Ret}, autonomous driving and multi-modal sensor fusion \citep{EDL_Autonomous_Fusion1,EDL_Autonomous_Fusion2}, and molecular property prediction \citep{soleimany2021evidential}. However, most existing EDL methods still inherit two fundamental limitations of standard EDL: the evidence generation process is not derived from a principled probabilistic objective, and the conventional KL penalty does not directly constrain the positive-class evidence, leaving the model vulnerable to magnitude-induced overconfidence.

While these EDL methods demonstrate empirical progress, they share a critical vulnerability: failing to impose any restriction on the unbounded growth of positive class evidence. Our proposed VI-EDL resolves this magnitude-induced overconfidence. By deriving the evidential objective strictly from the Evidence Lower Bound (ELBO) and enforcing a cosine prototype evidence layer, our framework bounds the maximum achievable evidence strictly, ensuring the reliability of evidential models in complex environments.

\section{Proposed Method}
\label{sec:method}

To reconstruct Evidential Deep Learning from the probabilistic perspective of Variational Inference (VI), we first formulate the classification task as a latent variable generative model and derive the Evidence Lower Bound (ELBO) objective. Subsequently, we introduce a distance-aware cosine prototype evidence network to generate the evidence.

\textbf{Notations.}
Let $\mathcal{X} \subset \mathbb{R}^{n \times d}$ be the input space for a $K$-class classification task. Given an input $\mathbf{x} \in \mathcal{X}$, the ground-truth label is represented as a one-hot vector $\mathbf{y}=[y_1,...,y_K] \in \{0, 1\}^K$, where $y_k = 1$ if the sample belongs to class $k$ and $y_k = 0$ otherwise. Let $\mathbf{p} = [p_1, \dots, p_K]$ represent a categorical class probability vector. To ensure valid probability semantics (sum up to 1), $\mathbf{p}$ must reside on the $(K-1)$-dimensional unit simplex, i.e., $\sum_{k=1}^K p_k = 1$ and $\forall k\in[1,2,...,K],p_k \geq 0 $.

\subsection{Variational Inference Framework for EDL}

From a probabilistic perspective, we treat the class probability distribution $\mathbf{p}$ as a latent variable. In a purely analytical Bayesian setting, we assign a Dirichlet prior over $\mathbf{p}$, denoted as $P(\mathbf{p}) = \text{Dir}(\mathbf{p}; \boldsymbol{\lambda}) \propto \prod_{k=1}^K p_k^{\lambda_k - 1}$. If one perfectly observes the categorical pseudo-counts $\mathbf{e} = [e_1, \dots, e_K]$ (i.e., the evidence of each class, where $\forall k\in[1,2,...,K], e_k\ge0$), the likelihood follows a Multinomial distribution $P(\mathbf{e}|\mathbf{p}) \propto \prod_{k=1}^K p_k^{e_k}$. According to Bayes' theorem, the exact posterior is formulated by multiplying the likelihood and the prior, where the exponents naturally add up:
\begin{equation}
    P(\mathbf{p}|\mathbf{e}) \propto P(\mathbf{e}|\mathbf{p}) P(\mathbf{p}) \propto \prod_{k=1}^K p_k^{(e_k + \lambda_k) - 1}.
\end{equation}
This Dirichlet-Multinomial conjugacy gracefully dictates that the exact posterior is $P(\mathbf{p}|\mathbf{e}) = \text{Dir}(\mathbf{p}; \mathbf{e} + \boldsymbol{\lambda})$. However, in our actual classification task, these explicit pseudo-counts are latent; we can only observe the raw input $\mathbf{x}$, rendering the true posterior computationally intractable. We apply variational inference to solve this by defining an approximate posterior $q_\phi(\mathbf{p}|\mathbf{x})$. To structurally ground this approximation in Bayesian principles, we design its variational family to explicitly mimic the additive posterior conjunction. Specifically, we employ a neural network $f_\phi$ to extract evidence $\mathbf{e} = f_\phi(\mathbf{x}) \in \mathbb{R}^K$ from the input, then parameterize the variational posterior using this extracted evidence combined with the prior. It is defined as:
\begin{equation}
q_\phi(\mathbf{p}|\mathbf{x}) = \text{Dir}(\mathbf{p}; \boldsymbol{\alpha}), \quad \text{where }\boldsymbol{\alpha} = \mathbf{e} + \boldsymbol{\lambda}.
\end{equation}

We optimize the model by maximizing the log marginal likelihood of the observed label $\mathbf{y}$. By introducing the variational posterior $q_\phi(\mathbf{p}|\mathbf{x})$, we derive the Evidence Lower Bound (ELBO) explicitly:
\begin{equation}
\begin{aligned}
    \log P(\mathbf{y}|\mathbf{x}) =& \log \int P(\mathbf{y}|\mathbf{p}) P(\mathbf{p}) d\mathbf{p} \\
    =& \log \int q_\phi(\mathbf{p}|\mathbf{x}) \frac{P(\mathbf{y}|\mathbf{p}) P(\mathbf{p})}{q_\phi(\mathbf{p}|\mathbf{x})} d\mathbf{p} \\
    \ge& \mathbb{E}_{q_\phi(\mathbf{p}|\mathbf{x})} \left[ \log \frac{P(\mathbf{y}|\mathbf{p}) P(\mathbf{p})}{q_\phi(\mathbf{p}|\mathbf{x})} \right] \\
    =& \mathbb{E}_{q_\phi(\mathbf{p}|\mathbf{x})} [\log P(\mathbf{y}|\mathbf{p})]-D_{KL}(q_\phi(\mathbf{p}|\mathbf{x}) \parallel P(\mathbf{p})).
\end{aligned}
\end{equation}

Maximizing this lower bound is mathematically equivalent to minimizing our variational loss function. To provide further flexibility in balancing the accuracy-uncertainty trade-off, we introduce a hyper-parameter $\beta$, leading to the final objective:
\begin{equation}
\begin{aligned}
    \mathcal{L}_{ELBO} =& \underbrace{-\mathbb{E}_{q_\phi(\mathbf{p}|\mathbf{x})}[\log P(\mathbf{y}|\mathbf{p})]}_{\text{Expected Negative Log-Likelihood}} + \beta \cdot \underbrace{D_{KL}(q_\phi(\mathbf{p}|\mathbf{x}) \parallel P(\mathbf{p}))}_{\text{Prior Regularization}}.
\label{ELBO}
\end{aligned}
\end{equation}

The role of the prior regularization term is to suppress the evidence across all classes, thereby mitigating overconfidence caused by excessively high evidence.

We assume that $\mathbf{p}$ is sampled from a Dirichlet prior distribution parameterized by a prior vector $\boldsymbol{\lambda} = [\lambda_1, \dots, \lambda_K]$:
\begin{equation}
    P(\mathbf{p}) = \text{Dir}(\mathbf{p}; \boldsymbol{\lambda}).
\end{equation}

To ensure a valid probability density function without inducing boundary sparsity, we enforce the constraint $\lambda_k \ge 1, \forall k \in [1,K]$. \textit{In this paper, we specifically set the prior vector to $\boldsymbol{\lambda} = \mathbf{1}_K$, where $\mathbf{1}_K$ is a $K$-dimensional vector of ones. The detailed theoretical rationale for this selection is deferred to \hyperref[insight:alpha]{\textbf{Insight 2}} in Section \ref{insights}. To maintain the coherence of the derivation, we retain the general notation $\boldsymbol{\lambda}$ in the subsequent formulations.}

For the generation of the observation $\mathbf{y}$, we assume it is drawn from a Gaussian likelihood distribution centered at the latent variable $\mathbf{p}$:
\begin{equation}
    P(\mathbf{y}|\mathbf{p}) \propto \exp\left(-\frac{1}{2\sigma^2}||\mathbf{y} - \mathbf{p}||_2^2\right).
\label{Gaussian}
\end{equation}

We now derive the two components of the ELBO in Eq. (\ref{ELBO}). For the Expected Negative Log-Likelihood term$-\mathbb{E}_{q_\phi(\mathbf{p}|\mathbf{x})}[\log P(\mathbf{y}|\mathbf{p})]$, with Eq. (\ref{Gaussian}) we have 
\begin{equation}
\begin{aligned}
    &-\mathbb{E}_{q_\phi(\mathbf{p}|\mathbf{x})}[\log P(\mathbf{y}|\mathbf{p})]) \propto \mathbb{E}_{q_\phi} \left[ ||\mathbf{y} - \mathbf{p}||_2^2 \right] \\
    =& \sum_{k=1}^K \left( (y_k - \mathbb{E}_{q_\phi}[p_k])^2 + \text{Var}_{q_\phi}(p_k) \right), \\
    =& \sum_{k=1}^K \left(y_k - \frac{\alpha_k}{S}\right)^2 + \sum_{k=1}^K \frac{\alpha_k(S - \alpha_k)}{S^2(S+1)} \\
    =& \underbrace{\sum_{k=1}^K (y_k - \hat{p}_k)^2}_{\text{Predictive Bias}} + \underbrace{\sum_{k=1}^K \frac{\hat{p}_k(1 - \hat{p}_k)}{S+1}}_{\text{Variance}},
\end{aligned}
\label{LMSE}
\end{equation}
where $S = \sum_{k=1}^K \alpha_k$ and $\hat{p}_k = \alpha_k / S$ for each class $k$. Optimizing Eq. \eqref{LMSE} equals minimizing both the predictive bias and the variance of the class probability distribution simultaneously.

For the Prior Regularization (KL divergence penalty) term, it can be written as $D_{KL}(q_\phi \parallel P) = \mathbb{E}_{q_\phi}[\log q_\phi(\mathbf{p})] - \mathbb{E}_{q_\phi}[\log P(\mathbf{p})]$ by definition. Expanding these expectations and using some properties of the Dirichlet distribution, we obtain the exact analytical form:
\begin{equation}
\begin{aligned}
    D_{KL}(\boldsymbol{\alpha}) =& \log \Gamma(S) - \sum_{k=1}^K \log \Gamma(\alpha_k)+ \sum_{k=1}^K (\alpha_k - \lambda_k)(\psi(\alpha_k) - \psi(S)) \\
    &\quad - \log \Gamma(||\boldsymbol{\lambda}||_1) + \sum_{k=1}^K \log \Gamma(\lambda_k),
\end{aligned}
\label{KL}
\end{equation}
where $\psi(\cdot)$ is the digamma function. The detailed derivation of Eq. \eqref{KL} is provided in Appendix \ref{app:kl_derivation}.

Since the prior parameter vector $\boldsymbol{\lambda}$ is fixed before training, the terms involving only $\boldsymbol{\lambda}$ are strict constants with respect to the network output $\boldsymbol{\alpha}$, yielding zero gradients during backpropagation. Dropping these independent constants, the effective objective for the KL divergence is given by:
\begin{equation}
\begin{aligned}
    \tilde{D}_{KL}(\boldsymbol{\alpha}) =& \log \Gamma(S) - \sum_{k=1}^K \log \Gamma(\alpha_k) + \sum_{k=1}^K (\alpha_k - \lambda_k)(\psi(\alpha_k) - \psi(S)),
\label{LKL}
\end{aligned}
\end{equation}
where $\Gamma(\cdot)$ represents the Gamma function. Integrating the expected reconstruction error in Eq. \eqref{LMSE} and the effective KL divergence in Eq. \eqref{LKL} into Eq. \eqref{ELBO}, the overall variational loss function for our model is formulated as:
\begin{equation}
\begin{aligned}
    \mathcal{L}_{VI-EDL}(\boldsymbol{\alpha}, \mathbf{y}) =& \mathcal{L}_{MSE} (\boldsymbol{\alpha}, \mathbf{y})+ \beta \cdot\tilde{D}_{KL}(\boldsymbol{\alpha})\\
    =&\sum_{k=1}^K (y_k - \hat{p}_k)^2 + \sum_{k=1}^K \frac{\hat{p}_k(1 - \hat{p}_k)}{S+1} +\\
    &\beta\cdot\bigg[\log \Gamma(S) - \sum_{k=1}^K \log \Gamma(\alpha_k) + \sum_{k=1}^K (\alpha_k - \lambda_k)(\psi(\alpha_k) - \psi(S))\bigg].
\end{aligned}
\label{overall}
\end{equation}

\subsection{Cosine Prototype Evidence Layer}

While the aforementioned KL prior regularization term in Eq. \eqref{ELBO} mitigates the issue of excessively large evidence from an optimization perspective, relying solely on loss-driven regularization may still leave room for vulnerability under extreme anomalies. To structurally enforce this constraint and further avoid overconfidence, we introduce a cosine prototype evidence layer in this section to replace the final linear classifier of $f_\phi$, thereby explicitly bounding the maximum amount of generated evidence. Specifically, given an input $\mathbf{x}$, the backbone network projects it into a lower-dimensional embedding space as a feature vector $\mathbf{\hat{x}}$. For each class $k \in \{1, \dots, K\}$, we introduce a learnable class prototype vector $\mathbf{r}_k$, representing the semantic anchor of that class. The evidence $e_k$ for class $k$ is formulated based on the cosine similarity between the feature $\mathbf{\hat{x}}$ and the prototype $\mathbf{r}_k$:
\begin{equation}
    e_k = \text{Softplus} \Big( \gamma \cdot \big( \cos(\mathbf{\hat{x}}, \mathbf{r}_k) - m \big) \Big),
\label{evidence}
\end{equation}
where $\cos(\mathbf{\hat{x}}, \mathbf{r}_k) = \frac{\mathbf{\hat{x}}^\top \mathbf{r}_k}{||\mathbf{\hat{x}}||_2 ||\mathbf{r}_k||_2}$, $\gamma$ is a learnable scale parameter to amplify the bounded cosine similarity, and $m$ is a learnable margin parameter serving as a similarity threshold. 

This cosine prototype layer limits that $e_k\in[0,\text{Softplus}(\gamma\cdot(1-m))]$ to control the growth of the evidence, and it explicitly forces the evidence network to learn compact intra-class representations. Specifically, when an unfamiliar sample is evaluated, its feature $\mathbf{\tilde{x}}$ will naturally exhibit low cosine similarity with all known class prototypes $\mathbf{r}_k$. Bounded by the margin $m$ and the Softplus function, the network will output near-zero evidence for all classes, thereby providing high uncertainty estimates. 


\subsection{Model Training and Prediction}

During the training phase, we employ an annealing warm-up technique. Specifically, we introduce an epoch-dependent annealing factor $\lambda_t = \min(1.0, t/{E_{warmup}})$, where $t$ is the current epoch and $E_{warmup}$ is the number of warm-up epochs. This factor dynamically scales the KL divergence term, allowing the network to focus on fitting the data likelihood in the initial stages and gradually imposing the uncertainty regularization as training stabilizes. 

Consequently, the annealing variational loss function at epoch $t$ is formulated as:
\begin{equation}
    \mathcal{L}_{anneal}^{(t)} = \mathcal{L}_{MSE}(\mathbf{\alpha}, \boldsymbol{y}) + \lambda_t \cdot \beta \cdot \tilde{D}_{KL}(q_\phi(\mathbf{p}|\mathbf{x}) \parallel P(\mathbf{p})).
\label{annealing}
\end{equation}

\begin{algorithm}[t]
\caption{Training process of VI-EDL}
\label{alg:viedl}
\begin{algorithmic}[1]
\State \textbf{Input:} Training dataset $\mathcal{D}=[X,Y]$.
\State \textbf{Parameters:} Hyper-parameter $\beta$ and warm-up epochs $E_{warmup}$.
\State \textbf{Output:} The optimal evidence network $f_{\phi^*}$, class prototype vectors $[\mathbf{r}^*_1, ...,\mathbf{r}^*_K]$, scale parameter $\gamma^*$ and margin parameter $m^*$.
\State Initialize the parameters $\phi$, $[\mathbf{r}_1,...,\mathbf{r}_K]$, $\gamma$ and $m$;
\For{epoch=1, 2, \dots}
    \State Calculate annealing factor $\lambda_t = \min(1.0, t / E_{warmup})$;
    \For{batch=1, 2, \dots}
        \State Sample a mini-batch $(x, y)$ from $\mathcal{D}$;
        \State Extract $\tilde{x}$ for each sample by the backbone network;
        \State Obtain evidence $e_k$ of each class $k$ by Eq. (12);
        \State Calculate $\alpha_k = e_k + 1$, $S = \sum_{k=1}^K \alpha_k$ and $\hat{p}_k = \alpha_k / S$;
        \State Calculate the annealing variational loss by Eq. \eqref{annealing};
        \State Update all parameters via gradient descent;
    \EndFor
\EndFor
\end{algorithmic}
\end{algorithm}

The pseudo-code is detailed in Algorithm \ref{alg:viedl}. After the training process, the optimized network $f_{\phi^*}$ can be deployed for deterministic prediction and epistemic uncertainty quantification simultaneously in a single forward pass. Given a new test instance, the model predicts the categorical distribution by calculating the expected probabilities $\hat{p}_k = \alpha_k / S, S = \sum_{k=1}^K \alpha_k$. The final predicted class label $\hat{y}$ is determined by the maximum expected probability $\hat{y} = \arg\max_{k \in \{1, \dots, K\}} \hat{p}_k.$ Meanwhile, the epistemic uncertainty $u$ of the prediction can be calculated as $u = \frac{\sum_{k=1}^K \lambda_k}{S}$, with $\forall k, \lambda_k=1$.

\section{Theoretical Analysis}
\label{sec:theory}

\renewcommand{\qedsymbol}{$\blacksquare$}

We not only theoretically establish the generalization bound \citep{gen1, gen2} for the proposed model, but also delve into the insights it provides.

\subsection{Basic Assumptions and the Generalization Bound}

We first introduce two essential assumptions regarding the evidence capacity and the input space, which serve as the foundation for our complexity analysis.

\begin{assumption}
\label{limited_evidence}
In Evidential Deep Learning (EDL), the output uncertainty is defined as the ratio of the prior capacity to the total distribution density: $\mu = \frac{||\boldsymbol{\lambda}||_1}{||e||_1 + ||\boldsymbol{\lambda}||_1}\in [0,1]$. We assume a robustness constraint $\mu \ge \mu_{min}$, which mathematically imposes a strict upper bound on the total generated evidence $||e||_1$:
\begin{equation}
    ||e||_1 \le ||\boldsymbol{\lambda}||_1 \left( \frac{1}{\mu_{min}} - 1 \right) \triangleq M.
\end{equation}
\end{assumption}

\begin{assumption}
\label{bounded_input}
The input feature space $\mathcal{X}$ of the neural network is bounded by a maximum radius $R > 0$, such that for any input $x \in \mathcal{X}$, $||x||_2 \le R$.
\end{assumption}

Assumption \ref{limited_evidence} indicates that the total amount of evidence is finite, and Assumption \ref{bounded_input} implies that the input space is bounded, both of which are highly mild and standard assumptions. According to these assumptions, we can obtain the generalization bound for our proposed model.

\begin{theorem}
\label{t4.1}
With probability at least $1 - \delta$, the expected true risk $\mathcal{L}_{true}(\phi)$ of the VI-EDL model is bounded by:

\begin{equation}
\small
\begin{aligned}
    \mathcal{L}_{true}(\phi) \le& \mathcal{L}_{emp}(\phi) + \mathcal{O}\Bigg( \Bigg[ ( 2 + \frac{1}{(K+1)^2} + \frac{2}{K(K+1)} + 2\beta + \frac{\beta}{\min_j \lambda_j} ) ||\boldsymbol{\lambda}||_1 + \beta \bigg]\times  \left( \frac{1}{\mu_{min}} - 1 \right) \\
    &\times \frac{R \sqrt{K} \left( \prod_{l=1}^{L-1} L_{\sigma_l} \right) \left( \prod_{l=1}^L ||W_l||_2 \right)}{\sqrt{n}} \Bigg)+ 3B\sqrt{\frac{\log(2/\delta)}{2n}},
\end{aligned}
\end{equation}
\normalsize
where $\mathcal{L}_{true}$ represents the true generalization risk  (prediction error on unseen samples) and $\mathcal{L}_{emp}$ represents the empirical risk (prediction error on the training set). We assume the evidence network $f_\phi$ is an $L$-layer neural network with weight matrices $W_l$ and $L_{\sigma_l}$-Lipschitz activation functions $\sigma_l$ $(l\in[1,2,...,L])$, and the value of the loss function $\mathcal{L}_{VI-EDL}$ is bounded by $[0, B]\in\mathbb{R}$.
\end{theorem}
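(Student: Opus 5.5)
The plan is the standard Rademacher-complexity argument, followed by a vector contraction step through the loss and a norm-based bound for the network. Let $\mathcal{F}_\ell=\{(\mathbf{x},\mathbf{y})\mapsto \mathcal{L}_{VI-EDL}(f_\phi(\mathbf{x})+\boldsymbol{\lambda},\mathbf{y})\}$ be the induced loss class, whose values lie in $[0,B]$. The usual symmetrization plus McDiarmid argument then gives, with probability at least $1-\delta$,
\[
\mathcal{L}_{true}(\phi)\le\mathcal{L}_{emp}(\phi)+2\hat{\mathfrak{R}}_n(\mathcal{F}_\ell)+3B\sqrt{\log(2/\delta)/(2n)}.
\]
This already produces the last term of the statement. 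What remains is to show that $\hat{\mathfrak{R}}_n(\mathcal{F}_\ell)$ is of the order of the bracketed constant times $(1/\mu_{min}-1)\,R\sqrt{K}\prod L_{\sigma_l}\prod\|W_l\|_2/\sqrt{n}$.

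\textbf{Step 1: Lipschitz constant of the loss.} I will bound $\|\nabla_{\mathbf{e}}\mathcal{L}_{VI-EDL}\|$ on the admissible set. This set is $\alpha_k=e_k+\lambda_k\ge\lambda_k\ge 1$, together with $\|\mathbf{e}\|_1\le M$ from Assumption~\ref{limited_evidence}, so $S\in[\|\boldsymbol{\lambda}\|_1,\|\boldsymbol{\lambda}\|_1+M]$. I will treat the three pieces of Eq.~\eqref{overall} separately.

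\emph{Bias term.} Here $\partial\hat p_k/\partial\alpha_j=(\delta_{kj}-\hat p_k)/S$, which gives a constant of order $2$.

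\emph{Variance term.} Differentiating $\sum_k\hat p_k(1-\hat p_k)/(S+1)$ gives contributions bounded using $S\ge K$ (since $\lambda_k\ge1$). I expect this to produce the $\frac{1}{(K+1)^2}$ and $\frac{2}{K(K+1)}$ terms.

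\emph{KL term.} The derivative is
\[
\partial\tilde D_{KL}/\partial\alpha_j=(\alpha_j-\lambda_j)\psi'(\alpha_j)-\sum_k(\alpha_k-\lambda_k)\psi'(S).
\]
The $\log\Gamma$ and $\psi$ parts cancel after differentiation. I will then use $\psi'(x)\le 1/x+1/x^2$. This yields $(\alpha_j-\lambda_j)\psi'(\alpha_j)\le 1+1/\min_j\lambda_j$, and $\|\mathbf{e}\|_1\psi'(S)\le 1+1/S$. Together these give the $2\beta+\beta/\min_j\lambda_j$ part, with the extra additive $\beta$ absorbing the leftover constant.

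Each piece is $O(1)$, but the statement multiplies everything by $\|\boldsymbol{\lambda}\|_1(1/\mu_{min}-1)=M$. I will therefore arrange the estimates so that the loss is Lipschitz in a scaled sense. Concretely, I will write the bracket times $M$ as an upper bound on the Lipschitz constant multiplied by the radius of the evidence range, rather than insisting on sharp constants. Since the claim is only up to $\mathcal{O}(\cdot)$, any bound of this form suffices. Call the resulting constant $L_{\mathcal{L}}$.

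\textbf{Step 2: Contraction.} I will apply Maurer's vector-valued contraction inequality to pass from $\mathcal{F}_\ell$ to the $K$ coordinate classes of $f_\phi$:
\[
\hat{\mathfrak{R}}_n(\mathcal{F}_\ell)\le\sqrt2\,L_{\mathcal{L}}\sum_{k}\hat{\mathfrak{R}}_n(f_{\phi,k}).
\]
Alternatively, I will bound directly by the $\ell_2$-valued complexity, which gives the $\sqrt{K}$ factor.

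\textbf{Step 3: Network complexity.} Peeling layer by layer, as in Golowich et al.\ or Bartlett et al., together with Assumption~\ref{bounded_input}, gives
\[
\hat{\mathfrak{R}}_n\lesssim R\prod_{l<L}L_{\sigma_l}\prod_l\|W_l\|_2/\sqrt{n}.
\]
Here I will absorb depth factors into $\mathcal{O}$ and treat the cosine prototype layer and the Softplus as a final $1$-Lipschitz map.

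I expect Step 1 to be the main obstacle. The difficulty is getting exactly the displayed bracket, with the specific $K$-dependent fractions, and justifying the multiplicative factor $M$ rather than an additive one. I will first attempt a crude but explicit digamma/trigamma bound on the box $\{\lambda_k\le\alpha_k\le\lambda_k+M\}$. Failing that, I will bound the loss range in terms of $M$ and fold the remaining constant into the $\mathcal{O}$ notation.
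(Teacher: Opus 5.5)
Your overall route is the paper's. Step~1 reproduces Theorem~\ref{tA.1}: entrywise derivative bounds via $S\ge K$ and $\psi'(x)\le 1/x+1/x^2$. Then come contraction, layer peeling down to $\hat{\mathfrak R}_n(\mathcal X)\le R/\sqrt n$, and McDiarmid. Your direct use of the empirical complexity with $3B$ is actually cleaner than the paper's detour through $\mathbb E_{\mathcal S}[\hat{\mathfrak R}_n]$.

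Note that the bracket in the statement is exactly $L_h\|\boldsymbol\lambda\|_1$. The trailing $+\beta$ is the $\beta/\|\boldsymbol\lambda\|_1$ piece of the KL bound multiplied by $\|\boldsymbol\lambda\|_1$, not a leftover constant. So the coefficient you must produce is precisely $L_h\cdot M$.

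That factor $M$ is the genuine gap, and none of your proposed fixes supplies it:
\begin{itemize}
\item Rescaling $\mathbf e=M\tilde{\mathbf e}$ makes the loss $L_hM$-Lipschitz in $\tilde{\mathbf e}$. But it also divides the complexity of the evidence class by $M$, so the product is unchanged.
\item ``Lipschitz constant times the radius of the evidence set'' measures the oscillation of the loss. After centering at $\mathbf e=\mathbf 0$ it gives only the trivial estimate $\hat{\mathfrak R}_n\le L_hM$. It is not a coefficient in front of $N:=R\sqrt K\prod_l L_{\sigma_l}\prod_l\|W_l\|_2/\sqrt n$.
\end{itemize}
What this route honestly yields is $\mathcal O(L_h\min\{M,N\})$. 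But $\min\{M,N\}\le MN$ holds only when $\max\{M,N\}\ge 1$. For large $n$ that means $M\ge 1$, i.e.\ $\mu_{min}\le\|\boldsymbol\lambda\|_1/(\|\boldsymbol\lambda\|_1+1)$. Since $M$ is displayed explicitly, and Insight~3 is precisely about $M\to 0$, it cannot be hidden in the $\mathcal O$.

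Do not expect to find the missing step in the paper's proof. It writes $\hat{\mathfrak R}_n(\mathcal F_M)=M\,\hat{\mathfrak R}_n(\mathcal F_1)$ and bounds $\hat{\mathfrak R}_n(\mathcal F_1)$ with prefactor $\sqrt K/M$. It then multiplies by $L_h\cdot M$ while dropping that $1/M$, even though the two factors cancel. The honest repair is either to add the hypothesis $M\ge 1$, so that $L_hN\le L_hMN$ and the statement becomes a loosening, or to state the bound with $L_h$ in place of $L_hM$.

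Steps~2--3 also do not deliver the displayed dependence. The paper glosses over the same points by applying scalar Ledoux--Talagrand to vector-valued layers.
\begin{itemize}
\item Maurer's inequality needs an $\ell_2$-Lipschitz constant, but your Step~1 computations bound individual partial derivatives. The term $-(S-\|\boldsymbol\lambda\|_1)\psi'(S)$ appears in every coordinate of $\nabla\tilde D_{KL}$. If all the evidence sits on one class, $\|\nabla\tilde D_{KL}\|_2$ is of order $\sqrt K$ once $M$ is at least of order $K$.
\item Your $\sqrt2\,L_{\mathcal L}\sum_k\hat{\mathfrak R}_n(f_{\phi,k})$ then costs a further factor $K$, or $\sqrt K$ only with Frobenius control of $W_L$. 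So you do not land on the displayed $\sqrt K$.
\item Golowich et al.\ peel with Frobenius norms. Bartlett et al.'s spectral bound carries width-dependent ratios $\|W_l^\top\|_{2,1}/\|W_l\|_2$ and a logarithmic width factor. Neither gives $R\prod_l\|W_l\|_2/\sqrt n$ up to depth factors alone.
\item The cosine prototype map is not $1$-Lipschitz: its constant behaves like $\gamma/\|\hat{\mathbf x}\|_2$ near the origin. So it cannot be appended as a harmless last layer; keep to the plain $L$-layer network the statement assumes.
\end{itemize}
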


The detailed step-by-step proofs are provided in Section \ref{proof4.1} of the Appendix, and the derivation of Theorem \ref{t4.1} proceeds in three progressive steps. First, we establish the global Lipschitz continuity of our variational loss function $\mathcal{L}_{VI-EDL}$ with respect to the generated evidence to bound its gradient. Second, leveraging the Ledoux-Talagrand contraction lemma and the structural unrolling of the neural network, we decouple the loss function and bound the expected Rademacher complexity of the hypothesis space. Finally, we employ McDiarmid's Inequality to tightly concentrate the empirical Rademacher complexity around its expectation, bridging the gap between empirical and true risks. 

\subsection{Theoretical Insights of the Generalization Bound} 
\label{insights}

Theorem \ref{t4.1} demonstrates that the true generalization risk $\mathcal{L}_{true}$ is rigorously bounded by the empirical risk $\mathcal{L}_{emp}$ and an additional methematical gap. This bound formally guarantees the generalization capability of the proposed VI-EDL framework,  ensuring that the model does not merely memorize training data but learns transferable patterns from the underlying class probability distribution. Moreover, it encapsulates the interplay between the data, the network architecture, and the uncertainty quantification mechanism, yielding several insights into the learning dynamics of the proposed model:

\begin{itemize}
    \item \textbf{1. Sample Size ($n$) and Dimensionality of the Label Space ($K$):} The generalization bound explicitly decreases at a rate of $\mathcal{O}(1/\sqrt{n})$. As the number of training samples $n$ increases, the bound becomes tighter, which naturally guaranties that the empirical error converges to the true expected risk. The number of classes $K$ plays a dual role. If $K=0$, the bound naturally tends to infinity, indicating that the learning problem is unsolvable. In contrast, as $K \to \infty$, the uncertainty is maximally compressed by the known labels, pushing the generalization bound to its minimum valid state. Intuitively, if a task possesses an infinitely comprehensive label space that encompasses all possible categories, any given instance would be completely interpretable by the known semantics. Under such ideal conditions, the uncertainty naturally vanishes, as the very concept of "unknowns" ceases to exist.

    \label{insight:alpha}
    \item \textbf{2. Optimality of the Uniform Prior ($\boldsymbol{\lambda}$):} The generalization bound is a strictly monotonically increasing linear function of $||\boldsymbol{\lambda}||_1$. To achieve the tightest possible bound, one must strictly minimize $||\boldsymbol{\lambda}||_1$. Given the validity constraint $\forall i ,\lambda_i \ge 1$ (required to prevent singular Dirichlet prior densities), the global minimum is achieved when $\boldsymbol{\lambda} = \mathbf{1}_K$, where $\mathbf{1}_K$ is a $K$-dimensional vector of ones. This mathematically proves that initializing with a uniform Dirichlet prior is not merely an empirical heuristic, but the theoretically optimal choice.

    \label{insight:uncertainty}
    \item \textbf{3. Trade-off via Minimum Uncertainty ($\mu_{min}$):} The minimum uncertainty $\mu_{min}$ acts as an uncertainty amplifier. If $\mu_{min} \to 0$, the amplifier tends to infinity. In this case, evidence provided by the model tends to infinity, indicating severe overconfidence. However, while forcing $\mu_{min} = 1$ makes the bound vanish, it completely destroys the predictive capability of the model. This model just output no evidence and total uncertainty for any instance, inflating the empirical error $\mathcal{L}_{emp}(\phi)$. Therefore, $\mu_{min}$ must be a balanced value, ideally learned dynamically by the model itself.

    \label{insight:complexity}
    \item \textbf{4. Network Capacity and Feature Complexity ($R, W_l, \sigma_l$):} The bound is directly proportional to the maximum feature radius $R$, the spectral norms of the weight matrices $||W_l||_2$, and the Lipschitz constants of the activation functions $L_{\sigma_l}$. Larger values imply a more complex hypothesis family, which increases the difficulty of fitting and degrades generalization. This also highlights the necessity of architectural regularization in evidential networks.
\end{itemize}

\section{Experiments}
\label{sec:experiments}

\subsection{Experimental Setup}

\textbf{Baselines.} Following standard evaluation protocols, we primarily focus on comparing our VI-EDL with other EDL methods, including the traditional EDL \citep{EDL}, $\mathcal{I}$-EDL \citep{IEDL}, Re-EDL \citep{REEDL}, and F-EDL \citep{FEDL}. Additionally, we present the results of other uncertainty quantification methods, including the probabilistic posterior network NatPN \citep{NATPN}, the deterministic baseline Softmax (CE), and the Deep Ensemble \citep{DE} method using 5 model instances ($M=5$).

\textbf{Datasets.} We select CIFAR-10 \citep{CIFAR}, SVHN \citep{SVHN}, Flowers \citep{Flowers}, CIFAR-100 \citep{CIFAR}, and MedMNIST \citep{MedMNIST} (including Blood, Path, Tissue, and OrganMNIST) for evaluation. In detail, We utilize four widely used natural image datasets in our experiments:
\begin{itemize}
    \item \textbf{CIFAR-10 and CIFAR-100} \citep{CIFAR}: Both datasets consist of 60,000 color images with a resolution of $32 \times 32$ pixels. CIFAR-10 is categorized into 10 classes of natural objects and animals (with 6,000 images per class), while CIFAR-100 contains 100 fine-grained classes (with 600 images per class).
    \item \textbf{SVHN} \citep{SVHN}: This dataset contains over 600,000 color images with a resolution of $32 \times 32$ pixels. The images depict real-world printed digits (0 to 9) cropped from house number plates in Google Street View.
    \item \textbf{Flowers} \citep{Flowers}: This dataset consists of 8,189 high-resolution images of flowers commonly occurring in the United Kingdom. The images are categorized into 102 distinct flower species.
\end{itemize}

Moreover, we also evaluate our model on four subsets from the MedMNIST v2 benchmark \citep{MedMNIST}, which provides standardized 2D biomedical images resized to $28 \times 28$ pixels:
\begin{itemize}
    \item \textbf{BloodMNIST:} Comprises 17,092 optical microscope images depicting normal peripheral blood cells. The images are categorized into 8 distinct cell types.
    \item \textbf{PathMNIST:} Contains 107,180 histological images of colorectal cancer tissues. The images are classified into 9 different tissue types.
    \item \textbf{TissueMNIST:} Consists of 236,386 monochromatic images showing human kidney cortex cells. These cells are segmented from wide-field microscopy images and classified into 8 categories.
    \item \textbf{OrganMNIST:} Derived from 3D abdominal CT scans that are processed into 2D bounding-box crops. It is categorized into 11 distinct abdominal organs. Taking the Axial view subset as an example, it contains 58,850 images.
\end{itemize}



\textbf{Implementation.} SVHN \citep{SVHN}, Flowers \citep{Flowers} and CIFAR-100 \citep{CIFAR} are utilized as OOD data for CIFAR-10 \citep{CIFAR}, while PathMNIST \citep{MedMNIST}, TissueMNIST \citep{MedMNIST} and OrganMNIST \citep{MedMNIST} are used for BloodMNIST \citep{MedMNIST}. Resnet18 serves as the backbone network for both CIFAR-10 and BloodMNIST. The Adam optimizer is employed with a learning rate of $1\times10^{-3}$ for CIFAR-10 and BloodMNIST. The hyper-parameter $\beta$ is set to 0.5 and 0.1 for CIFAR-10 and BloodMNIST,which is selected from the range [0.1:0.1:1.0] on the validation set. The batch size is set to 256, and the warm-up and maximum epoch is set to 20 and 30 for CIFAR-10 and BloodMNIST. Reported results are averaged over 5 runs. 


\begin{table}[!tbp]
\centering
\caption{OOD detection results on CIFAR-10, with mean and standard deviation reported over five runs. The best results are highlighted in \textbf{bold}. All results are shown in percentiles (\%).}
\label{tab:cifar10_full_std}
\resizebox{\textwidth}{!}{%
\begin{tabular}{l | c | cccc | cccc}
\toprule
\multirow{2}{*}{\textbf{Method}} & \multirow{2}{*}{\textbf{ID ACC ($\uparrow$)}} & \multicolumn{4}{c|}{\textbf{OOD AUROC ($\uparrow$)}} & \multicolumn{4}{c}{\textbf{OOD FPR95 ($\downarrow$)}} \\ \cmidrule(lr){3-6} \cmidrule(lr){7-10}
 & & SVHN & Flowers & CIFAR-100 & Avg. & SVHN & Flowers & CIFAR-100 & Avg. \\ \midrule
Softmax (CE) & 82.85 $\pm$ 0.15 & 83.82 $\pm$ 0.12 & 80.94 $\pm$ 0.23 & 72.71 $\pm$ 0.45 & 79.16 & 79.31 $\pm$ 0.21 & 78.01 $\pm$ 0.34 & 87.27 $\pm$ 0.41 & 81.53 \\
Deep Ensembles ($M=5$) & 91.12 $\pm$ 0.11 & 87.32 $\pm$ 0.08 & 84.14 $\pm$ 0.15 & 84.27 $\pm$ 0.11 & 85.24 & 53.86 $\pm$ 0.12 & 70.90 $\pm$ 0.21 & 60.01 $\pm$ 0.18 & 61.59 \\
NatPN (ICLR 2022) & 86.45 $\pm$ 0.21 & 86.19 $\pm$ 0.11 & 85.11 $\pm$ 0.14 & 85.01 $\pm$ 0.19 & 85.44 & 71.69 $\pm$ 0.25 & 68.53 $\pm$ 0.31 & 65.68 $\pm$ 0.28 & 68.63 \\
EDL (NeurIPS 2018) & 82.03 $\pm$ 0.35 & 80.38 $\pm$ 0.15 & 82.24 $\pm$ 0.21 & 82.57 $\pm$ 0.18 & 81.73 & 90.47 $\pm$ 0.35 & 73.62 $\pm$ 0.41 & 70.59 $\pm$ 0.39 & 78.23 \\
$\mathcal{I}$-EDL (ICML 2023) & 87.33 $\pm$ 0.24 & 85.22 $\pm$ 0.12 & 84.78 $\pm$ 0.19 & 84.48 $\pm$ 0.22 & 84.83 & 69.12 $\pm$ 0.28 & 59.48 $\pm$ 0.33 & 62.98 $\pm$ 0.35 & 63.86 \\
Re-EDL (TPAMI 2025) & 90.10 $\pm$ 0.09 & 90.92 $\pm$ 0.09 & 86.52 $\pm$ 0.11 & 87.40 $\pm$ 0.08 & 89.61 & 57.70 $\pm$ 0.15 & 66.68 $\pm$ 0.19 & \textbf{57.74} $\pm$ 0.12 & 60.71 \\
F-EDL (NeurIPS 2025) & 89.27 $\pm$ 0.18 & 82.22 $\pm$ 0.14 & 87.25 $\pm$ 0.15 & 87.28 $\pm$ 0.21 & 85.58 & 45.35 $\pm$ 0.18 & 57.10 $\pm$ 0.22 & 61.42 $\pm$ 0.25 & 54.62 \\
\midrule
\textbf{VI-EDL (Ours)} & \textbf{92.32} $\pm$ 0.10 & \textbf{93.62} $\pm$ 0.05 & \textbf{87.86} $\pm$ 0.08 & \textbf{87.55} $\pm$ 0.12 & \textbf{91.33} & \textbf{19.11} $\pm$ 0.09 & \textbf{56.12} $\pm$ 0.12 & 65.61 $\pm$ 0.15 & \textbf{46.95} \\
\bottomrule
\end{tabular}%
}
\label{tab:cifar10_ood}

\end{table}

\begin{table}[!tbp]
\centering
\caption{OOD detection results on BloodMNIST, with mean and standard deviation reported over five runs. The best results are highlighted in \textbf{bold}. All results are shown in percentiles (\%).}
\label{tab:medmnist_full_std}
\resizebox{\textwidth}{!}{%
\begin{tabular}{l | c | cccc | cccc}
\toprule
\multirow{2}{*}{\textbf{Method}} & \multirow{2}{*}{\textbf{ID ACC ($\uparrow$)}} & \multicolumn{4}{c|}{\textbf{OOD AUROC ($\uparrow$)}} & \multicolumn{4}{c}{\textbf{OOD FPR95 ($\downarrow$)}} \\ \cmidrule(lr){3-6} \cmidrule(lr){7-10}
 & & PathMNIST & TissueMNIST & OrganMNIST & Avg. & PathMNIST & TissueMNIST & OrganMNIST & Avg. \\ \midrule
Softmax (CE) & 80.18 $\pm$ 0.22 & 45.05 $\pm$ 0.45 & 55.53 $\pm$ 0.48 & 30.39 $\pm$ 0.67 & 43.66 & 98.05 $\pm$ 0.12 & 92.69 $\pm$ 0.09 & 98.02 $\pm$ 0.15 & 96.25 \\
Deep Ensembles ($M=5$) & \textbf{90.21} $\pm$ 0.15 & 43.95 $\pm$ 0.32 & 59.66 $\pm$ 0.54 & 51.34 $\pm$ 0.41 & 51.65 & 98.48 $\pm$ 0.18 & 92.50 $\pm$ 0.42 & 90.73 $\pm$ 0.33 & 93.90 \\
NatPN (ICLR 2022) & 89.21 $\pm$ 0.28 & 62.82 $\pm$ 0.19 & 63.25 $\pm$ 0.33 & 68.44 $\pm$ 0.22 & 64.84 & 98.87 $\pm$ 0.11 & 96.15 $\pm$ 0.25 & 73.78 $\pm$ 0.37 & 89.60 \\
EDL (NeurIPS 2018) & 82.72 $\pm$ 0.41 & 31.30 $\pm$ 0.55 & 43.01 $\pm$ 0.09 & 15.33 $\pm$ 0.48 & 29.88 & 99.58 $\pm$ 0.08 & 92.22 $\pm$ 0.67 & 99.51 $\pm$ 0.10 & 97.10 \\
$\mathcal{I}$-EDL (ICML 2023) & 89.65 $\pm$ 0.31 & 61.32 $\pm$ 0.21 & 53.23 $\pm$ 0.32 & 37.77 $\pm$ 0.39 & 50.77 & 88.62 $\pm$ 0.44 & 98.74 $\pm$ 0.15 & 92.05 $\pm$ 0.28 & 93.14 \\
Re-EDL (TPAMI 2025) & 86.00 $\pm$ 0.18 & 57.33 $\pm$ 0.42 & 59.94 $\pm$ 0.60 & 52.53 $\pm$ 0.31 & 56.60 & 94.00 $\pm$ 0.26 & 90.98 $\pm$ 0.10 & 72.52 $\pm$ 0.41 & 85.83 \\
F-EDL (NeurIPS 2025) & 86.64 $\pm$ 0.25 & 41.01 $\pm$ 0.38 & 53.30 $\pm$ 0.36 & 52.60 $\pm$ 0.44 & 48.97 & 99.54 $\pm$ 0.09 & 93.81 $\pm$ 0.22 & 93.76 $\pm$ 0.32 & 95.70 \\
\midrule
\textbf{VI-EDL (Ours)} & 89.71 $\pm$ 0.12 & \textbf{72.01} $\pm$ 0.14 & \textbf{76.25} $\pm$ 0.22 & \textbf{77.48} $\pm$ 0.18 & \textbf{75.25} & \textbf{33.43} $\pm$ 0.21 & \textbf{25.34} $\pm$ 0.35 & \textbf{53.43} $\pm$ 0.29 & \textbf{37.40} \\
\bottomrule
\end{tabular}%
}
\label{tab:medmnist_ood}

\end{table}

\begin{figure}[!tbp]
    \centering
    \begin{minipage}[b]{0.32\textwidth}
        \centering
        \includegraphics[width=\linewidth]{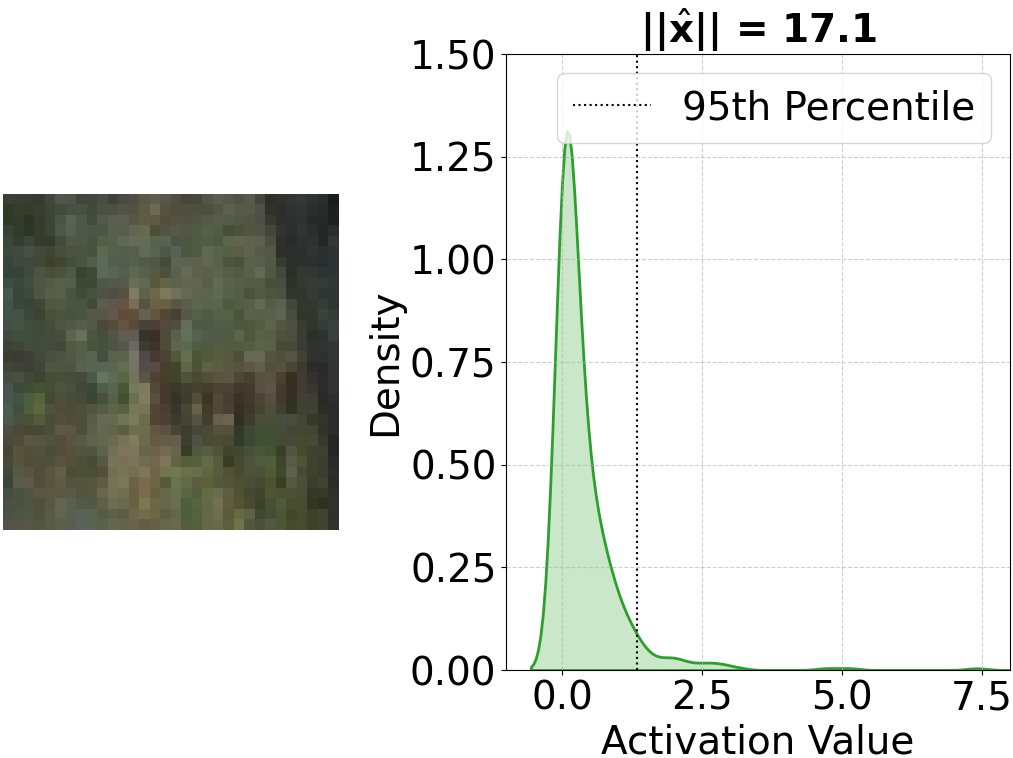}
        
        \vspace{0.1cm} 
        \small (a) CIFAR-10
    \end{minipage}
    \hfill 
    \begin{minipage}[b]{0.32\textwidth}
        \centering
        \includegraphics[width=\linewidth]{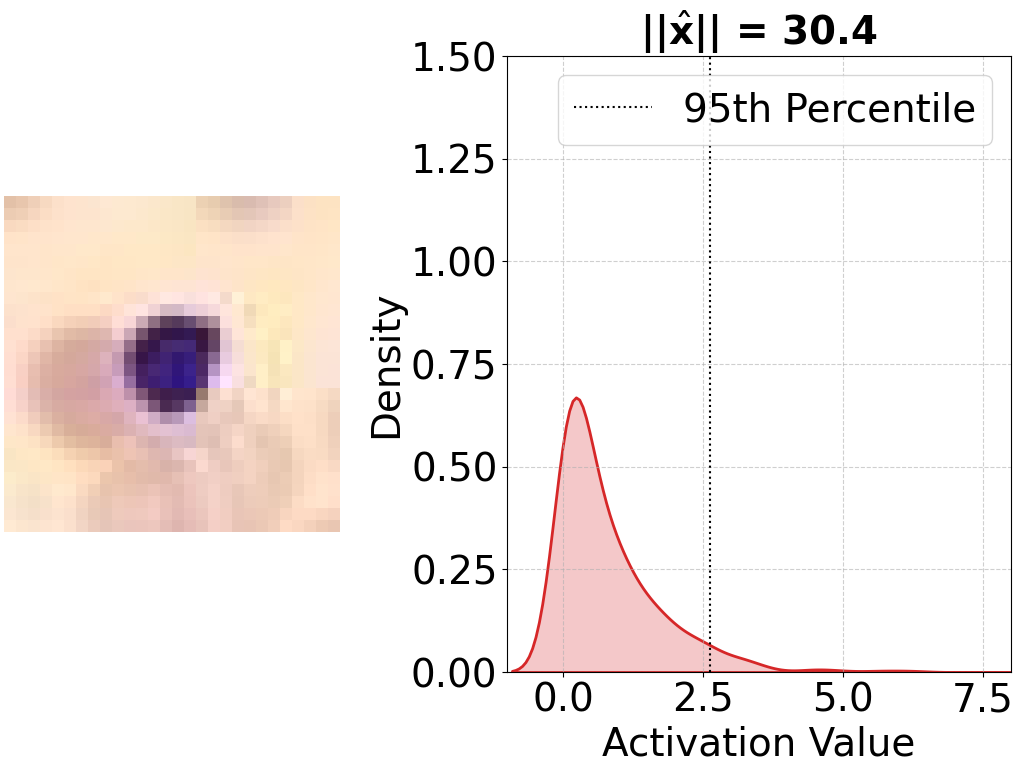}
        
        \vspace{0.1cm}
        \small (b) BloodMNIST
    \end{minipage}
    \hfill
    \begin{minipage}[b]{0.32\textwidth}
        \centering
        \includegraphics[width=\linewidth]{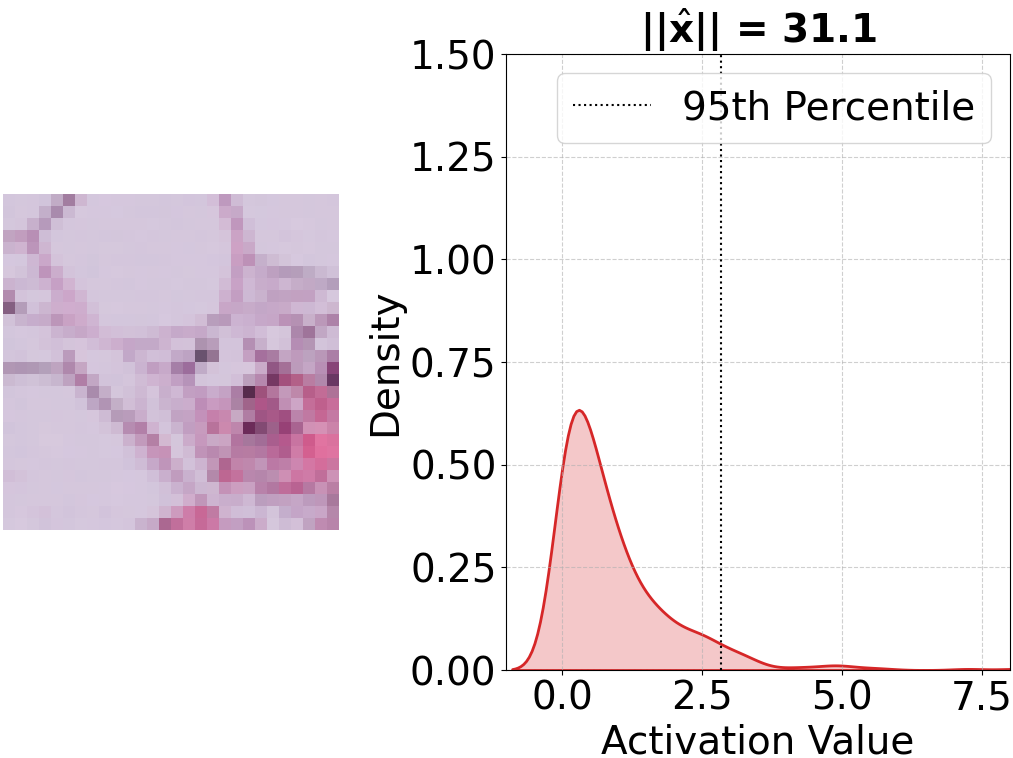}
        
        \vspace{0.1cm}
        \small (c) PathMNIST
    \end{minipage}
    
    \caption{Comparison of feature magnitude $\|\hat{\mathbf{x}}\|$.The left panel of each subfigure is the original image. The right panel shows the kernel density curve, which illustrates the density distribution of the 512-dimensional activation values extracted by the convolutional kernels for the given image. The dotted line indicates the 95th percentile, and the area under this curve equals 1. \textbf{(a)} A natural image from CIFAR-10 serves as a control, showing concentrated activations. \textbf{(b)} \& \textbf{(c)} Two medical samples from BloodMNIST and PathMNIST exhibiting a broader density distribution, triggering larger feature magnitude $\|\hat{\mathbf{x}}\|$.}
    \label{fig:magnitude_comparison}
    
\end{figure}

\subsection{Out-of-Distribution and Noise Robustness Evaluation}

We conduct extensive experiments focusing on Out-of-Distribution detection and robustness against data noise. The results are reported in classification accuracy on in-distribution (ID) samples (ID ACC ($\uparrow$)) and the capability to distinguish unknown from known samples (OOD AUROC ($\uparrow$) and OOD FPR95 ($\downarrow$)).

\textbf{OOD Detection.}
We first train the models on the CIFAR-10 and BloodMNIST dataset and run OOD detection test on three other datasets, with the results summarized in Table \ref{tab:cifar10_ood} and \ref{tab:medmnist_ood}. These results demonstrate that our proposed VI-EDL outperforms existing baselines across most uncertainty metrics. Notably, EDL baselines often suffer from a noticeable degradation in ID ACC due to the optimization conflict introduced by heuristic KL penalties. In contrast, our method successfully maintains a highly competitive ID ACC while simultaneously achieving most of the highest AUROC and the lowest FPR95. This indicates that the cosine prototype layer and KL penalty on all classes effectively neutralizes magnitude-induced overconfidence when encountering OOD samples. Furthermore, it is noteworthy that the overall performance in Table \ref{tab:medmnist_ood} is generally inferior to that in Table \ref{tab:cifar10_ood}. We hypothesize that this discrepancy arises because the maximum feature magnitude of the BloodMNIST dataset, after being mapped through the CNN backbone, is larger than that of CIFAR-10. 

To verify this, we extract representative samples from the MedMNIST benchmark (specifically, BloodMNIST and PathMNIST) and compare their pre-logits feature magnitudes $\|\hat{\mathbf{x}}\|$ mapped by a standard ResNet-18 backbone against a natural image from CIFAR-10. The results are illustrated in Fig. \ref{fig:magnitude_comparison}. As shown in Fig. \ref{fig:magnitude_comparison} \textbf{(a)}, the feature activations of the natural image in CIFAR-10 are well-concentrated within a narrow range. In contrast, the two medical images in Fig. \ref{fig:magnitude_comparison} \textbf{(b)} \& \textbf{(c)} exhibit relatively heavy-tail density distributions. This phenomenon likely occurs because CNNs are highly sensitive to domain-specific interferences such as cell staining variations and acquisition artifacts. Coupled with the fact that biological tissues possess inherently more complex structures than natural images, these factors over-stimulate the convolutional filters, causing the network to output abnormally high feature values. As indicated by \hyperref[insight:complexity]{\textbf{Insight 4}} in Section \ref{insights}, a larger feature radius yields a looser generalization bound, which consequently leads to a degradation in the method's performance.

\begin{figure}[!tbp]
    \centering
    
    \includegraphics[width=0.9\textwidth]{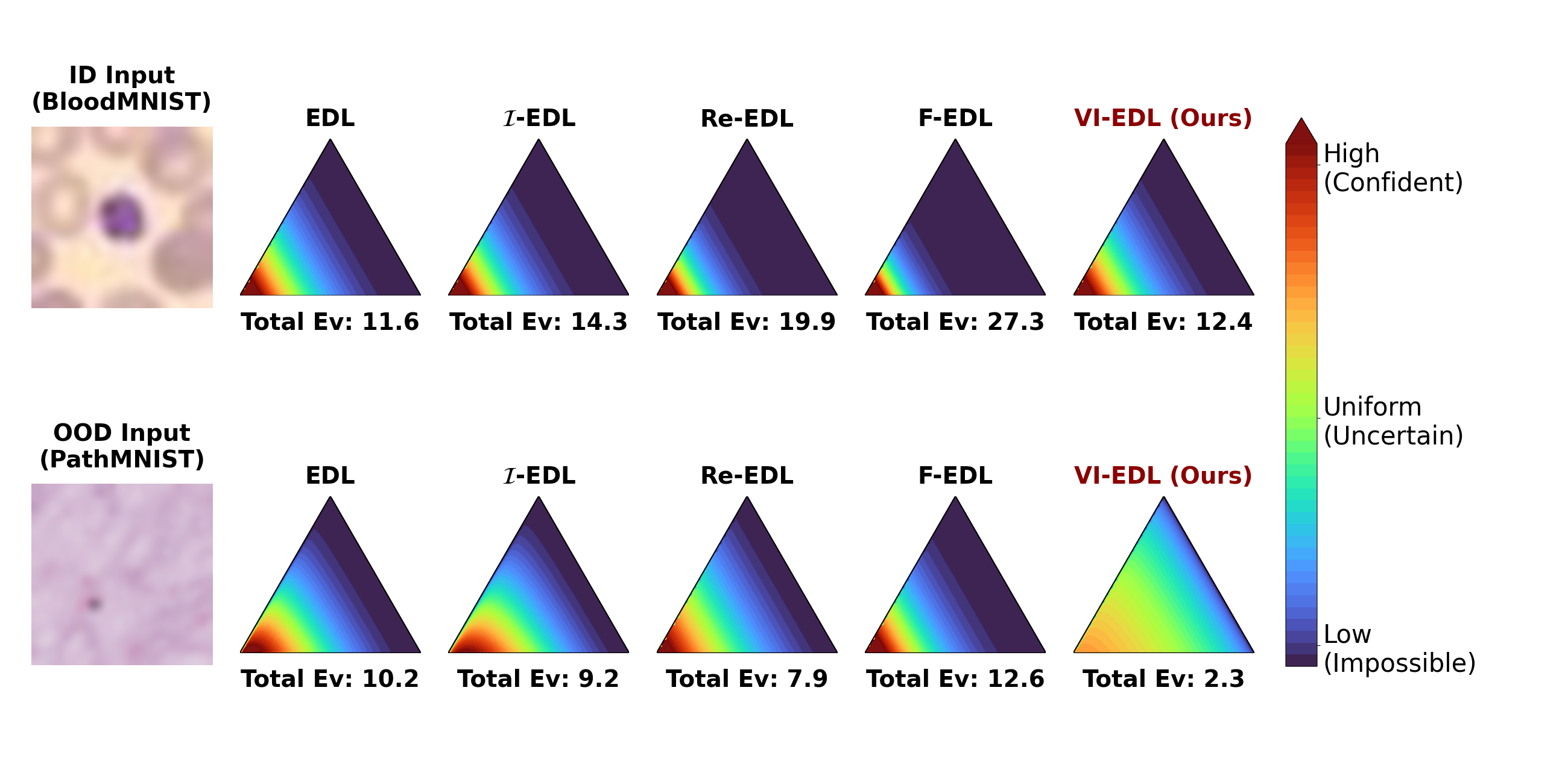} 
    
    \caption{Visualization of the predicted class probability distributions for EDL baselines and our method in the BloodMNIST and PathMNIST dataset. Each vertex of the triangles represents a specific class (with top-3 evidence), and total evidence of all the classes (Total Ev) is presented below triangles. As is shown in this figure, our method generates \textbf{high evidence} for the ID input and \textbf{lowest evidence} for the OOD input, indicating its effectiveness in OOD detection among these datasets. }
    \label{fig:simplex_medmnist}
\end{figure}

\textbf{Visualization Analysis.}
We visualize the generated evidence of each EDL method in Table \ref{tab:medmnist_ood} to further analyze the performance of our method. Since the 8-class BloodMNIST (9-class PathMNIST) dataset naturally produces a 7-dimensional (8-dimensional) probability simplex, we dynamically project the predictions onto a 2-simplex (triangle) by extracting the top-3 categories with the highest evidence for each instance. Moreover, we construct the hardest possible OOD samples by pairing a randomly selected ID image with its closest OOD counterpart. This is retrieved by minimizing the Mean Squared Error (MSE) between the fixed ID sample and all available OOD samples. As illustrated in Figure \ref{fig:simplex_medmnist}, for confident ID samples, the evidence is sharply concentrated at the corresponding category vertex for all methods, showing high confidence. However, for OOD inputs, other EDL baselines still generate relatively high evidence. In contrast, the evidence of our method is suppressed across all dimensions, indicating high uncertainty.

A plausible underlying reason of the failure for these EDL baselines may lie in the feature extraction stage. Given a raw input image $\mathbf{x}$, the backbone network maps it to a deep feature representation $\hat{\mathbf{x}}$. In medical domains, raw images are inevitably subject to physical acquisition variations. When processed by a deep CNN backbone, these non-semantic perturbations may over-stimulate convolutional filters, inducing abnormally large feature magnitudes $\|\hat{\mathbf{x}}\|$ (See Fig. \ref{fig:magnitude_comparison}). Conventional linear evidence layers conflate this raw intensity with semantic relevance, generating excessive high evidence. However, by applying a KL divergence penalty and cosine prototype layer, our method decouples semantic alignment from feature intensity and suppresses evidence of all classes, ensuring that abnormally large $||\hat{\mathbf{x}}||$ fails to generate a high evidence. 

\begin{table}[!tbp]
\centering
\caption{Noise detection results on CIFAR-10, with mean and standard deviation reported over five runs. A higher value of $\sigma$ indicates a more intense additive Gaussian noise perturbation on the original data. The best results are highlighted in \textbf{bold}. All results are shown in percentiles (\%).}
\resizebox{\textwidth}{!}{%
\begin{tabular}{l |  cccc | cccc}
\toprule
\multirow{2}{*}{\textbf{Method}}  & \multicolumn{4}{c|}{\textbf{OOD AUROC ($\uparrow$)}} & \multicolumn{4}{c}{\textbf{OOD FPR95 ($\downarrow$)}} \\ \cmidrule(lr){2-5} \cmidrule(lr){6-9}
& $\sigma=0.05$ & $\sigma=0.10$ & $\sigma=0.20$ & Avg. & $\sigma=0.05$ & $\sigma=0.10$ & $\sigma=0.20$ & Avg. \\ \midrule
Softmax (CE)  & 51.24 $\pm$ 0.35 & 53.68 $\pm$ 1.12 & 55.92 $\pm$ 2.15 & 53.61 & 96.45 $\pm$ 0.28 & 95.12 $\pm$ 0.95 & 93.84 $\pm$ 2.41 & 95.14 \\
Deep Ensembles ($M=5$) & 57.85 $\pm$ 0.42 & 71.34 $\pm$ 1.25 & 82.46 $\pm$ 2.34 & 70.55 & 86.30 $\pm$ 0.38 & 68.55 $\pm$ 1.08 & 51.20 $\pm$ 2.12 & 68.68 \\
NatPN (ICLR 2022) & 55.42 $\pm$ 0.48 & 66.89 $\pm$ 1.30 & 76.53 $\pm$ 1.95 & 66.28 & 89.75 $\pm$ 0.45 & 75.40 $\pm$ 1.42 & 64.18 $\pm$ 2.65 & 76.44 \\
EDL (NeurIPS 2018)  & 53.15 $\pm$ 0.51 & 61.20 $\pm$ 1.18 & 68.75 $\pm$ 2.50 & 61.03 & 92.60 $\pm$ 0.40 & 83.45 $\pm$ 1.35 & 77.30 $\pm$ 2.75 & 84.45 \\
$\mathcal{I}$-EDL (ICML 2023)  & 54.88 $\pm$ 0.39 & 64.55 $\pm$ 1.45 & 72.18 $\pm$ 2.20 & 63.87 & 90.15 $\pm$ 0.35 & 79.80 $\pm$ 1.20 & 72.55 $\pm$ 2.45 & 80.83 \\
Re-EDL (TPAMI 2025) & 58.60 $\pm$ 0.45 & 73.25 $\pm$ 1.05 & 86.40 $\pm$ 2.18 & 72.75 & 85.25 $\pm$ 0.42 & 65.10 $\pm$ 1.15 & 45.80 $\pm$ 2.05 & 65.38 \\
F-EDL (NeurIPS 2025)  & 59.35 $\pm$ 0.32 & 75.80 $\pm$ 1.38 & 89.15 $\pm$ 2.60 & 74.77 & 83.90 $\pm$ 0.36 & 62.45 $\pm$ 1.28 & 41.30 $\pm$ 2.35 & 62.55 \\
\midrule
\textbf{VI-EDL (Ours)}& \textbf{62.16} $\pm$ 0.55 & \textbf{81.52} $\pm$ 1.34 & \textbf{98.59} $\pm$ 2.62 & \textbf{80.76} & \textbf{81.21} $\pm$ 0.49 & \textbf{43.64} $\pm$ 1.41 & \textbf{24.06} $\pm$ 1.87 & \textbf{49.64} \\
\bottomrule
\end{tabular}%
}
\label{tab:cifar10_noise}
\end{table}

\textbf{Robustness Against Data Noise.}
We also assess the robustness of the proposed model against varying degrees of data noise in CIFAR-10, as detailed in Table \ref{tab:cifar10_noise}. Our method consistently outperforms all baseline approaches across different noise intensities, effectively distinguishing heavily corrupted inputs from clean data, highlighting the model's exceptional noise robustness. The results corroborate that the integration of a global KL divergence penalty alongside the cosine prototype mechanism enforces strict regulation over the evidence generation process, thereby successfully preventing the model from hallucinating high-confidence predictions when encountering various noisy samples.

\subsection{Autonomous Driving Scenario.} 
Moreover, we run our VI-EDL and two other baseline EDL methods (Re-EDL and F-EDL) in the real-world autonomous driving scenario on the BDD100K dataset, which is a large-scale and highly diverse driving video dataset including standard driving scenes. To rigorously assess the robustness of these methods under anomalous conditions, we construct an anomalous set by sampling 1,000 clean images from the dataset and applying synthetic perturbations. As illustrated in Fig. \ref{fig:dataset_showcase}, we generate 500 images simulating rainy conditions and 500 simulating foggy conditions. The three methods are then tasked with identifying this newly constructed anomalous dataset. The results are detailed in Table \ref{tab_UNC}. Notably, VI-EDL exhibits the most significant discrepancy in uncertainty estimates between normal and anomalous driving conditions. Specifically, the uncertainty difference (Unc. Diff) of our proposed model reaches 0.1380, which is about 4.63 times higher than that of Re-EDL and 2.13 times that of F-EDL. This uncertainty gap demonstrates a superior capability of VI-EDL to reliably differentiate between normal and anomalous driving scenarios. 

\begin{figure}[!tbp] 
    \centering
    \begin{minipage}[b]{0.32\textwidth}
        \centering
        \includegraphics[width=\linewidth]{}
        
        \small (a) Normal: Clear
    \end{minipage}
    \hfill
    \begin{minipage}[b]{0.32\textwidth}
        \centering
        \includegraphics[width=\linewidth]{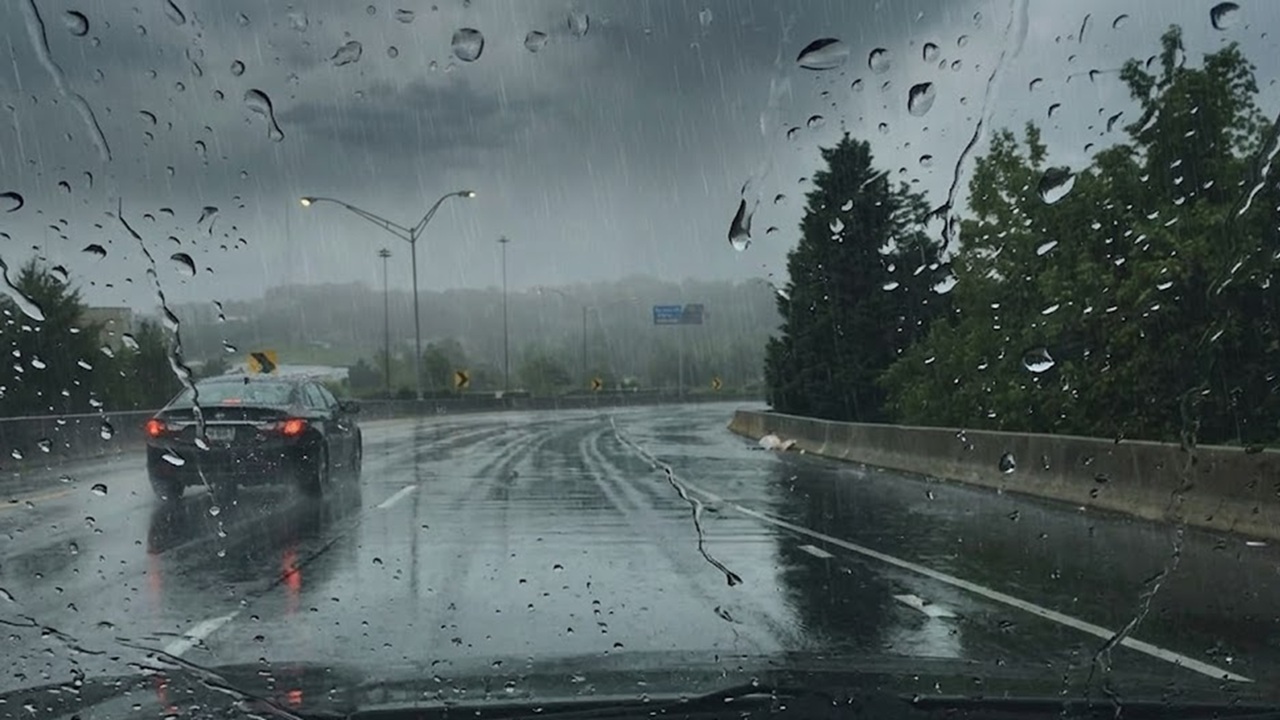} 
        
        \small (b) Anomalous: Rainy
    \end{minipage}
    \hfill
    \begin{minipage}[b]{0.32\textwidth}
        \centering
        \includegraphics[width=\linewidth]{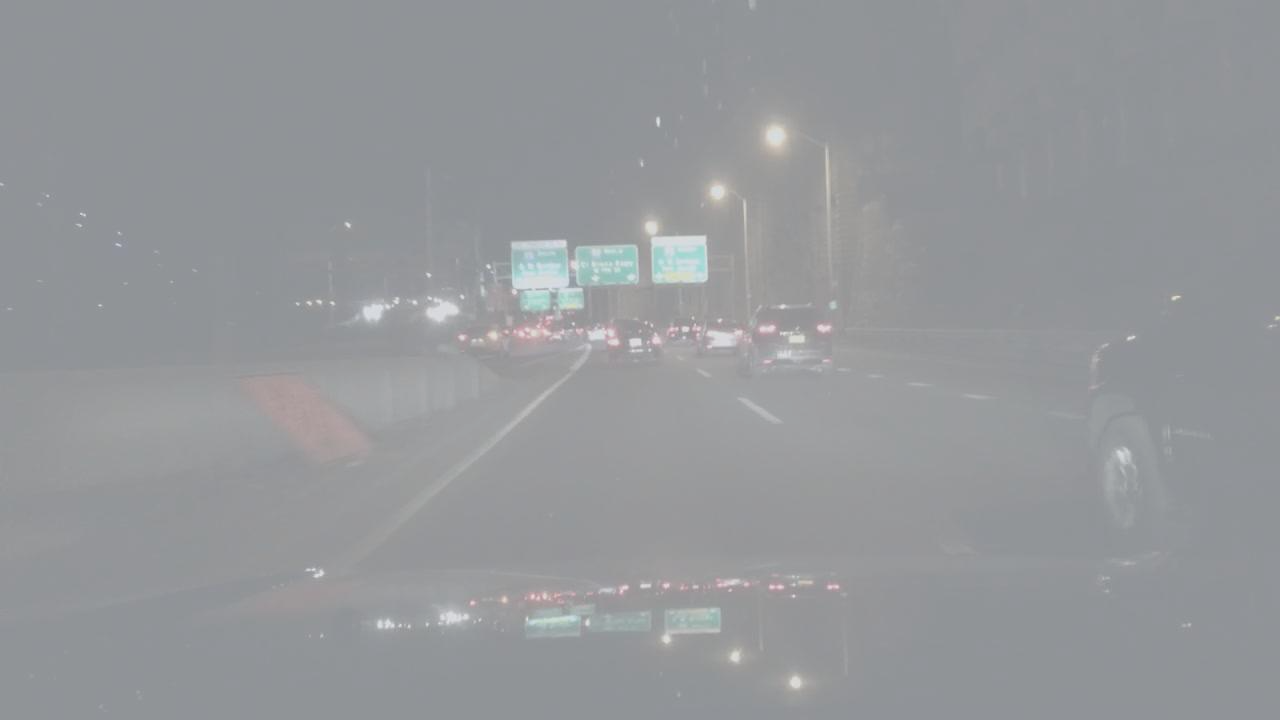}
        
        \small (c) Anomalous: Foggy
    \end{minipage}
    
    \caption{Comparison of normal and anomalous weather in autonomous driving. \textbf{(a)} Normal: a typical driving scene with optimal lighting. \textbf{(b)} Anomalous: a rainy night where dense water droplets distort features. \textbf{(c)} Anomalous: a foggy environment with extreme atmospheric scattering.}
    \label{fig:dataset_showcase}
\end{figure}

We attribute this superior performance to the structural constraints of VI-EDL. These weather conditions typically introduce complex visual distortions that yield aberrant feature magnitudes, and our cosine prototype layer and KL penalty effectively neutralizing magnitude-induced overconfidence. This also proves that our method is highly practical: when the estimated uncertainty exceeds a predefined threshold, the autonomous driving system can proactively issue an alert, prompting human driver intervention and thereby enhancing overall driving safety.

\begin{table}[!tbp]
\centering

\begin{minipage}[t]{0.47\textwidth}
\vspace{0pt}
\centering
\small
\resizebox{\linewidth}{!}{
\begin{tabular}{lccc}
\toprule
Method & Normal Unc. & Anomalous Unc. & Unc. Diff. ($\uparrow$) \\
\midrule
Re-EDL & 0.2333 & 0.2631 & 0.0298 \\
F-EDL & 0.2500 & 0.3148 & 0.0648 \\
VI-EDL (Ours) & \textbf{0.4126} & \textbf{0.5806} & \textbf{0.1380} \\
\bottomrule
\end{tabular}
}
\caption{Comparison of uncertainty across different methods on normal and anomalous samples in the BDD100K dataset. A larger uncertainty difference indicates a better ability to distinguish OOD samples from ID samples.}
\label{tab_UNC}
\end{minipage}
\hfill
\begin{minipage}[t]{0.49\textwidth}
\vspace{0pt}
\centering
\small
\resizebox{\linewidth}{!}{
\begin{tabular}{cc|c|ccc}
\toprule
KL & Cosine & ID ACC & SVHN & FLOWERS & CIFAR-100 \\
\midrule
$\times$ & $\times$ & 66.93 & 33.32 & 65.10 & 61.84 \\
$\times$ & $\checkmark$ & 80.32 & 81.51 & 78.18 & 78.81 \\
$\checkmark$ & $\times$ & 73.87 & 82.52 & 73.36 & 78.37 \\
$\checkmark$ & $\checkmark$ & \textbf{92.32} & \textbf{93.62} & \textbf{87.86} & \textbf{87.55} \\
\bottomrule
\end{tabular}
}
\caption{Ablation study on KL divergence and cosine prototype layer. CIFAR-10 is used as the ID dataset, while SVHN, Flowers, and CIFAR-100 are used as OOD datasets.}
\label{tab:ablation}
\end{minipage}

\end{table}

\begin{figure}[!tbp] 
    \centering
    \begin{minipage}[b]{0.48\textwidth}
        \centering
        \includegraphics[width=\linewidth]{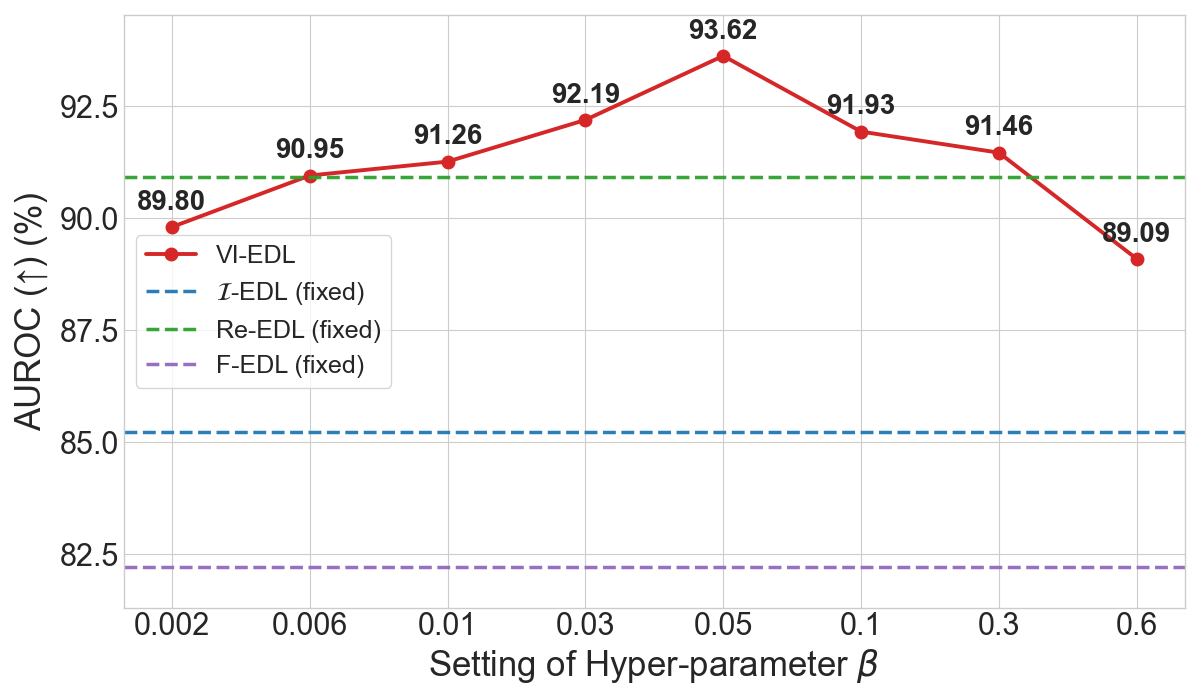}
        
        \small (a) AUROC ($\uparrow$)
    \end{minipage}
    \hfill
    \begin{minipage}[b]{0.48\textwidth}
        \centering
        \includegraphics[width=\linewidth]{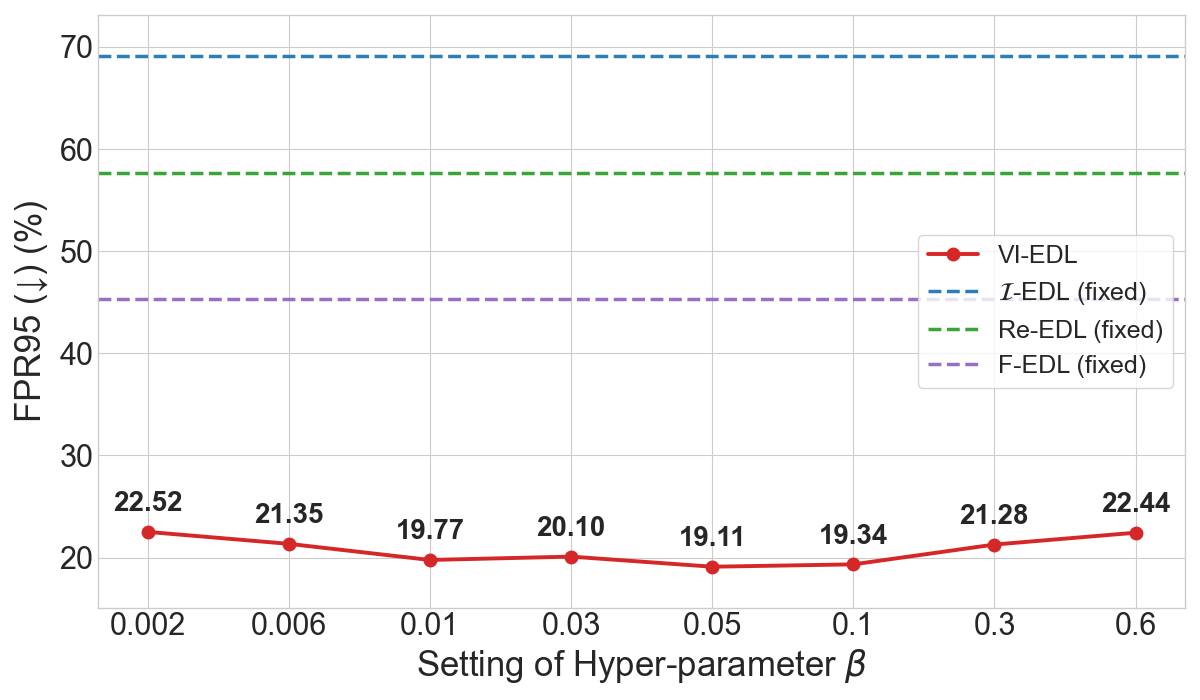} 
        
        \small (b) FPR95 ($\downarrow$)
    \end{minipage}
    \hfill
    \caption{Sensitivity analysis of the hyper-parameter $\beta$ on CIFAR-10 vs SVHN in terms of AUROC ($\uparrow$) and FPR95 ($\downarrow$). The red solid line represents the performance of our method VI-EDL varying with $\beta$, and the dashed lines represent the performance of three other EDL baselines, which is fixed to their results of optimal parameters.}
    \label{fig:sensitivity}
\end{figure}

\subsection{Ablation Study and Parameter Sensitivity Analysis}

\textbf{Ablation Study.} 
We independently validate the efficacy of the two proposed terms: the KL divergence term and the cosine prototype layer. The quantitative results are summarized in Table~\ref{tab:ablation}, where a \checkmark indicates the inclusion of a term and a \ding{55} denotes its ablation (setting the KL hyper-parameter $\beta$ to zero or substituting the cosine prototype layer with a Softplus layer). As observed, the removal of either term leads to a noticeable degradation in both ID accuracy and OOD detection capabilities, and our original model yields the highest overall performance. These demonstrate that both of the two terms are effective in enhancing the model's performance.

\textbf{Parameter Sensitivity Analysis.} 
In our VI-EDL framework, the hyper-parameter $\beta$ is selected from $[0.01, 0.03, 0.05, 0.1]$ across all datasets and experiments by cross-validation. To investigate the robustness of our model, we evaluate the OOD detection metrics (AUROC and FPR95) across a broad spectrum of $\beta$ values ranging from $0.002$ to $0.6$, as illustrated in Figure \ref{fig:sensitivity}. The results show that, across a wide range of $\beta$ values, VI-EDL achieves AUROC scores that outperform the three EDL baselines in most cases, while its FPR95 remains lower than all three baselines under every tested $\beta$ setting, indicating the robustness of our model to the choice of it hyper-parameter$\beta$.

\section{Conclusion}
\label{sec:conclusion}

In this paper, we presented VI-EDL, a principled variational inference framework that reconstructs Evidential Deep Learning (EDL). Recognizing the drawbacks of conventional EDL, we reformulated the evidence generation process as an approximate posterior inference problem. By replacing heuristic evidence modeling with a theoretically grounded variational formulation and a cosine prototype evidence layer, VI-EDL improves the interpretability of evidential learning while preventing evidence from growing excessively across all classes. We also establish a generalization bound and show that setting $\boldsymbol{\alpha} = \mathbf{e} + \mathbf{1}$ minimizes this bound (\hyperref[insight:alpha]{\textbf{Insight 2}}), and further reveal how the predicted uncertainty (\hyperref[insight:uncertainty]{\textbf{Insight 3}}), feature complexity, and network complexity (\hyperref[insight:complexity]{\textbf{Insight 4}}) affect this bound. in \textbf{Section \ref{insights}}. Extensive empirical evaluations across diverse benchmarks demonstrate that VI-EDL achieves state-of-the-art performance in out-of-distribution (OOD) detection, noise detection and autonomous driving scenario.


\bibliography{my}

\newpage

\appendix
\setcounter{equation}{0}
\setcounter{figure}{0}
\renewcommand{\qedsymbol}{$\blacksquare$}    
\renewcommand{\theequation}{A.\arabic{equation}}
\renewcommand{\thefigure}{A.\arabic{figure}}
\renewcommand{\thetheorem}{A.\arabic{theorem}}
\renewcommand{\thelemma}{A.\arabic{lemma}}
\renewcommand{\theassumption}{A.\arabic{assumption}}

\section{Detailed Derivation of Eq. \eqref{KL}}
\label{app:kl_derivation}
First, recall the probability density function of a Dirichlet distribution. For the variational posterior $q_\phi(\mathbf{p})$, it is defined over the $(K-1)$-dimensional unit simplex as:
\begin{equation}
    q_\phi(\mathbf{p}) = \frac{1}{B(\boldsymbol{\alpha})} \prod_{k=1}^K p_k^{\alpha_k - 1},
\end{equation}
where $B(\boldsymbol{\alpha})$ is the multivariate Beta function, expanding to:
\begin{equation}
    B(\boldsymbol{\alpha}) = \frac{\prod_{k=1}^K \Gamma(\alpha_k)}{\Gamma\left(\sum_{k=1}^K \alpha_k\right)} = \frac{\prod_{k=1}^K \Gamma(\alpha_k)}{\Gamma(S)},
\end{equation}
with $S = \sum_{k=1}^K \alpha_k$. Similarly, the prior distribution is 

\begin{equation}
    P(\mathbf{p}) = \frac{1}{B(\boldsymbol{\lambda})} \prod_{k=1}^K p_k^{\lambda_k - 1}.
\end{equation}

By definition, the KL divergence between $q_\phi$ and $P$ is given by:
\begin{equation}
\begin{aligned}
    D_{KL}(q_\phi \parallel P) &= \mathbb{E}_{q_\phi} \left[ \log \frac{q_\phi(\mathbf{p})}{P(\mathbf{p})} \right] = \mathbb{E}_{q_\phi} [ \log q_\phi(\mathbf{p})  -  \log P(\mathbf{p}) ].
\end{aligned}
\label{eq:app_kl_def}
\end{equation}

Next, we expand the log-densities of both distributions:
\begin{equation}
    \log q_\phi(\mathbf{p}) = -\log B(\boldsymbol{\alpha}) + \sum_{k=1}^K (\alpha_k - 1) \log p_k,
\end{equation}
\begin{equation}
    \log P(\mathbf{p}) = -\log B(\boldsymbol{\lambda}) + \sum_{k=1}^K (\lambda_k - 1) \log p_k.
\end{equation}

Substituting these expansions back into Equation \ref{eq:app_kl_def}, we group the terms together:
\begin{equation}
\begin{aligned}
    D_{KL}(q_\phi \parallel P) &=\mathbb{E}_{q_\phi} [ \log B(\boldsymbol{\lambda}) - \log B(\boldsymbol{\alpha})+ \sum_{k=1}^K (\alpha_k - \lambda_k) \log p_k ] \\
    &= \log B(\boldsymbol{\lambda}) - \log B(\boldsymbol{\alpha})+ \sum_{k=1}^K (\alpha_k - \lambda_k) \mathbb{E}_{q_\phi} [\log p_k].
\end{aligned}
\label{eq:app_kl_grouped}
\end{equation}

To evaluate the expectation term, we utilize a well-known property of the Dirichlet distribution. The expectation of the log-probability of a category under a Dirichlet distribution relies on the digamma function $\psi(\cdot)$, which is the logarithmic derivative of the gamma function $\Gamma(\cdot)$. Specifically:
\begin{equation}
    \mathbb{E}_{q_\phi} [\log p_k] = \psi(\alpha_k) - \psi\left(\sum_{j=1}^K \alpha_j\right) = \psi(\alpha_k) - \psi(S).
\label{eq:app_digamma}
\end{equation}

Plugging Equation \ref{eq:app_digamma} into Equation \ref{eq:app_kl_grouped}, we obtain:

\begin{equation}
\begin{aligned}
    D_{KL}(q_\phi \parallel P) =& \log B(\boldsymbol{\lambda}) - \log B(\boldsymbol{\alpha})+ \sum_{k=1}^K (\alpha_k - \lambda_k) (\psi(\alpha_k) - \psi(S)).
\label{eq:app_kl_almost_final}
\end{aligned}
\end{equation}

Finally, we fully unpack the logarithmic multivariate Beta functions:
\begin{equation}
    -\log B(\boldsymbol{\alpha}) = \log \Gamma(S) - \sum_{k=1}^K \log \Gamma(\alpha_k),
\end{equation}
\begin{equation}
    \log B(\boldsymbol{\lambda}) = \sum_{k=1}^K \log \Gamma(\lambda_k) - \log \Gamma(||\boldsymbol{\lambda}||_1).
\end{equation}

Substituting these into Equation \ref{eq:app_kl_almost_final} and rearranging the terms algebraically yields the exact analytical form presented in the main text:
\begin{equation}
\begin{aligned}
    D_{KL}(\boldsymbol{\alpha} \parallel \boldsymbol{\lambda}) =& \log \Gamma(S) - \sum_{k=1}^K \log \Gamma(\alpha_k) + \sum_{k=1}^K (\alpha_k - \lambda_k)(\psi(\alpha_k) - \psi(S)) \\&- \log \Gamma(||\boldsymbol{\lambda}||_1) 
    + \sum_{k=1}^K \log \Gamma(\lambda_k),
\end{aligned}
\end{equation}
which concludes the derivation of Eq. \ref{KL}

\setcounter{equation}{0}
\setcounter{figure}{0}
\renewcommand{\qedsymbol}{$\blacksquare$}    
\renewcommand{\theequation}{B.\arabic{equation}}
\renewcommand{\thefigure}{B.\arabic{figure}}
\renewcommand{\thetheorem}{B.\arabic{theorem}}
\renewcommand{\thelemma}{B.\arabic{lemma}}
\renewcommand{\theassumption}{B.\arabic{assumption}}

\section{Proofs for Theorem \ref{t4.1}}

\label{proof4.1}

\renewcommand{\qedsymbol}{$\blacksquare$}    
To derive the generalization bound of our model, we start with the overall variational loss function $L_{VI-EDL}$. To decouple this complex loss function during the analysis, we must establish the Lipschitz continuity of it. Let $z = (x, y)$ denote a sample pair (feature and label) from a dataset $\mathcal{D}$, and let $h(z) = \mathcal{L}_{VI-EDL}(\alpha, y)$ denote the variational loss function.

\begin{theorem}
\label{tA.1}
Given the ground-truth label space $\mathcal{Y} = \{y \in \mathbb{R}^K \mid y_i \in [0, 1], \sum y_i = 1\}$, the variational loss function $\mathcal{L}_{VI-EDL}$ is globally Lipschitz continuous with respect to the network output. The Lipschitz constant $L_h$ is bounded by:
\begin{equation}
\begin{aligned}
     L_{h}\le& 2 + \frac{1}{(K+1)^2} + \frac{2}{K(K+1)} +  \beta\left(2 + \frac{1}{\min_j \lambda_j} + \frac{1}{||\boldsymbol{\lambda}||_1}\right).
\end{aligned}
\end{equation}
\end{theorem}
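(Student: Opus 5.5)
The plan is to bound the gradient of $h$ with respect to the evidence vector $\mathbf{e}$ (equivalently $\boldsymbol{\alpha}=\mathbf{e}+\boldsymbol{\lambda}$, since the shift has unit Jacobian) in the $\ell_1$/$\ell_\infty$ dual sense. By the mean value theorem, $L_h\le\sup_{\boldsymbol{\alpha}}\max_j|\partial h/\partial\alpha_j|$ summed appropriately. We split $h=\mathcal{L}_{MSE}+\beta\tilde D_{KL}$ and bound each piece separately, so that $L_h\le L_{MSE}+\beta L_{KL}$. Throughout we use $\alpha_j\ge\lambda_j\ge 1$ and $S\ge\|\boldsymbol{\lambda}\|_1\ge K$.

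\textbf{MSE part.} Write $\hat p_k=\alpha_k/S$, so $\partial\hat p_k/\partial\alpha_j=(\delta_{kj}-\hat p_k)/S$. The bias term $\sum_k(y_k-\hat p_k)^2$ has derivative $-\frac{2}{S}\big[(y_j-\hat p_j)-\sum_k(y_k-\hat p_k)\hat p_k\big]$. Since $\mathbf{y},\hat{\mathbf p}$ lie in the simplex, each bracketed quantity is at most $1$ in absolute value, which gives a bound of $2$ (even before using $1/S\le 1$).

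For the variance term, rewrite it as $\frac{1-\sum_k\hat p_k^2}{S+1}$ and differentiate by the product rule. This produces two contributions:
\begin{itemize}
\item $-(1-\|\hat{\mathbf p}\|^2)/(S+1)^2$, bounded by $1/(K+1)^2$;
\item $-\frac{2}{S(S+1)}(\hat p_j-\|\hat{\mathbf p}\|^2)$, bounded by $2/(K(K+1))$.
\end{itemize}
Both bounds use $S\ge K$. This accounts for the constants $2+\frac{1}{(K+1)^2}+\frac{2}{K(K+1)}$.

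\textbf{KL part.} Differentiating $\tilde D_{KL}$ gives
\[
\partial_{\alpha_j}\tilde D_{KL}=(\alpha_j-\lambda_j)\psi'(\alpha_j)-\psi'(S)\Big(S-\|\boldsymbol{\lambda}\|_1\Big),
\]
because the $\psi(S)$ and $\log\Gamma$ derivatives cancel. We then apply the standard trigamma bound $\psi'(x)\le \frac1x+\frac1{x^2}$ for $x>0$:
\begin{itemize}
\item $(\alpha_j-\lambda_j)\psi'(\alpha_j)\le 1+\frac{1}{\alpha_j}\le 1+\frac{1}{\min_j\lambda_j}$;
\item $(S-\|\boldsymbol{\lambda}\|_1)\psi'(S)\le 1+\frac1S\le 1+\frac{1}{\|\boldsymbol{\lambda}\|_1}$.
\end{itemize}
The triangle inequality then yields $2+\frac{1}{\min_j\lambda_j}+\frac{1}{\|\boldsymbol{\lambda}\|_1}$, and multiplying by $\beta$ gives the KL contribution.

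\textbf{Main obstacle.} The delicate step is making the gradient bound a genuine Lipschitz constant for the vector map. We must choose the norm pairing consistently: sup of partial derivatives versus the $\ell_2$ norm of the gradient, which would introduce $\sqrt K$ factors. We intend to state the bound coordinatewise, using $\ell_\infty$ of the gradient, i.e. Lipschitz with respect to $\ell_1$ on $\mathbf{e}$. This is the form later fed into the vector contraction step, where the $\sqrt K$ appears separately in Theorem~\ref{t4.1}.

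A secondary subtlety is the sign of the two KL terms. They partially cancel, so a sharper constant may be available, but the triangle inequality suffices for the stated bound.
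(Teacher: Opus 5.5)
Your proposal is correct and follows essentially the same route as the paper. It uses the same MSE/KL split, the same product-rule bounds $1/(K+1)^2$ and $2/(K(K+1))$ from $S\ge K$, the same cancellation leaving $(\alpha_j-\lambda_j)\psi'(\alpha_j)-(S-\|\boldsymbol{\lambda}\|_1)\psi'(S)$ controlled by $\psi'(x)\le 1/x+1/x^2$, and the same reading of Lipschitz continuity as a uniform bound on $\|\nabla h\|_\infty$, which you identify, more carefully than the paper, as Lipschitz with respect to $\ell_1$.

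One small slip: the bracket $(y_j-\hat p_j)-\sum_k(y_k-\hat p_k)\hat p_k$ can approach $2$ in absolute value (one-hot $\mathbf{y}$ at $j$, $\hat{\mathbf p}$ near another vertex). So the constant $2$ genuinely needs the prefactor $2/S$ together with $S\ge K\ge 2$, as in the paper's $2K/S\le 2$; it does not hold ``even before using $1/S\le 1$''.
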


\begin{proof}
To prove that the variational loss function $h(z) = \mathcal{L}_{VI-EDL}(\alpha, y)$ is globally Lipschitz continuous over the valid domain $\Omega = \{\alpha \in \mathbb{R}^K \mid \alpha_i \ge 1, \forall i\}$, we first state the fundamental theoretical basis for our proof.

According to the mean value theorem for multivariate functions, a function is $L$-Lipschitz continuous if its gradient norm is uniformly bounded. In our coordinate space, it suffices to prove that the absolute values of all partial derivatives are uniformly bounded by a constant $L > 0$, i.e., $\left| \frac{\partial \mathcal{L}_{VI-EDL}}{\partial \alpha_i} \right| \le L, \forall \alpha \in \Omega$. We now analyze the Mean Squared Error ($\mathcal{L}_{MSE}$) and the effective KL divergence ($\tilde{D}_{KL}$) term-by-term.

\textbf{Step 1: Bounding the Gradient of $\mathcal{L}_{MSE}$.} 
We first recall the functional form of the MSE loss, which represents the expected $\mathcal{L}_2$ error over the class probability distribution:
\begin{equation}
    \mathcal{L}_{MSE}(\alpha) = \sum_{j=1}^K (y_j - \hat{p}_j)^2 + \frac{1}{S+1} \left( 1 - \sum_{j=1}^K \hat{p}_j^2 \right),
\end{equation}
where $S = \sum_{j=1}^K \alpha_j$ and $\hat{p}_j = \frac{\alpha_j}{S}$. To compute the gradient, we first note that $\frac{\partial \hat{p}_j}{\partial \alpha_i} = \frac{1}{S}(\delta_{ij} - \hat{p}_j)$. Given $\hat{p}_j \in [0, 1]$, it follows that $\left| \frac{\partial \hat{p}_j}{\partial \alpha_i} \right| \le \frac{1}{S}$.

Applying the chain rule to the predictive error term $f_1(\alpha) = \sum_{j=1}^K (y_j - \hat{p}_j)^2$:
\begin{equation}
    \left| \frac{\partial f_1}{\partial \alpha_i} \right| = \left| -2 \sum_{j=1}^K (y_j - \hat{p}_j) \frac{\partial \hat{p}_j}{\partial \alpha_i} \right| \le 2 \sum_{j=1}^K 1 \cdot \frac{1}{S} = \frac{2K}{S} \le 2.
\end{equation}
Next, for the variance penalty term $f_2(\alpha) = \frac{1}{S+1} (1 - \sum_{j=1}^K \hat{p}_j^2)$, applying the product rule yields:
\begin{equation}
\begin{aligned}
    \bigg| \frac{\partial f_2}{\partial \alpha_i} \bigg| =& \bigg| -\frac{1}{(S+1)^2} \left( 1 - \sum_{j=1}^K \hat{p}_j^2 \right) + \frac{1}{S+1} \bigg( -2 \sum_{j=1}^K \hat{p}_j \frac{\partial \hat{p}_j}{\partial \alpha_i} \bigg) \bigg| \le \frac{1}{(K+1)^2} + \frac{2}{K(K+1)}.
\end{aligned}
\end{equation}
Thus, the gradient of $\mathcal{L}_{MSE}$ is consistently bounded by a constant.

\textbf{Step 2: Bounding the Gradient of the Effective KL Divergence.} 
The effective part of the generalized KL divergence (dropping prior-only constant terms) is given by:
\begin{equation}
\begin{aligned}
    \tilde{D}_{KL}(\alpha) =& \log \Gamma(S) - \sum_{k=1}^K \log \Gamma(\alpha_k) + \sum_{k=1}^K (\alpha_k - \lambda_k)(\psi(\alpha_k) - \psi(S)).
\end{aligned}
\end{equation}
Taking the partial derivative with respect to $\alpha_i$ and utilizing the property of the digamma function $\frac{\partial}{\partial x}\log \Gamma(x) = \psi(x)$, many complex terms cancel out, leaving:
\begin{equation}
    \frac{\partial \tilde{D}_{KL}}{\partial \alpha_i} = (\alpha_i - \lambda_i)\psi'(\alpha_i) - (S - ||\boldsymbol{\lambda}||_1)\psi'(S),
\end{equation}
where $\psi'(\cdot)$ is the trigamma function. To bound this, we utilize the inequality $\frac{1}{x} < \psi'(x) < \frac{1}{x} + \frac{1}{x^2}$ for $x > 0$. Since $\alpha_i \ge \lambda_i \ge 1$:
\begin{equation}
    0 \le (\alpha_i - \lambda_i)\psi'(\alpha_i) < 1 - \frac{\lambda_i}{\alpha_i} + \frac{\alpha_i - \lambda_i}{\alpha_i^2} \le 1 + \frac{1}{\lambda_i}.
\end{equation}
Similarly, since $S \ge ||\boldsymbol{\lambda}||_1 \ge K \ge 1$, the second term is bounded by $1 + \frac{1}{||\boldsymbol{\lambda}||_1}$. Applying the triangle inequality:
\begin{equation}
    \left| \frac{\partial \tilde{D}_{KL}}{\partial \alpha_i} \right| < 2 + \frac{1}{\min_j \lambda_j} + \frac{1}{||\boldsymbol{\lambda}||_1}.
\end{equation}

\textbf{Conclusion.} 
Combining the results from Step 1 and Step 2, and accounting for the trade-off hyper-parameter $\beta$, the infinity norm of the gradient $\nabla_\alpha \mathcal{L}_{VI}$ is uniformly bounded across the domain $\Omega$ by the structural constant $L_h$:
\begin{equation}
\begin{aligned}
    L_{h} \le& 2 + \frac{1}{(K+1)^2} + \frac{2}{K(K+1)} +  \beta\bigg(2 +
    \frac{1}{\min_j \lambda_j} + \frac{1}{||\boldsymbol{\lambda}||_1}\bigg) \triangleq \left| \frac{\partial \mathcal{L}_{VI-EDL}}{\partial \alpha_i} \right|.
\end{aligned}
\end{equation}
This derivation consistently aligns with the Lipschitz constant $L_h$ introduced in Theorem \ref{tA.1}, which ends the proof of Theorem \ref{tA.1}.
\end{proof}

Let $\mathcal{H}$ be the hypothesis space composed of the loss functions $h(z)$, and let $\mathcal{S} = \{z_1, \dots, z_n\}$ be a dataset of size $n$ drawn i.i.d. from $\mathcal{D}$. Based on Theorem \ref{tA.1} and the Ledoux-Talagrand Contraction Inequality, we can decouple the loss function and establish the expected Rademacher complexity bound.

\begin{theorem}
\label{tA.2}
Assume the evidence network $f_\phi$ is an $L$-layer feed-forward neural network parameterized by weight matrices $W_l$ and $L_{\sigma_l}$-Lipschitz activation functions $\sigma_l$, where $l\in[1,L]$. The expected Rademacher complexity of the hypothesis space $\mathcal{H}$ over the sample set $\mathcal{S}$ is upper-bounded by:
\begin{equation}
\begin{aligned}
    \mathbb{E}_\mathcal{S}[\hat{\mathfrak{R}}_n(\mathcal{H})] \le& \mathcal{O}\bigg( L_h \cdot M \cdot \frac{R \sqrt{K} \left( \prod_{l=1}^{L-1} L_{\sigma_l} \right) \left( \prod_{l=1}^L ||W_l||_2 \right)}{\sqrt{n}} \bigg).
\end{aligned}
\end{equation}
\end{theorem}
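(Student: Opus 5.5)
The approach is the standard one: strip off the loss with a contraction argument, then bound the complexity of the evidence network by peeling it layer by layer.

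\textbf{Step 1 (vector contraction).} Write each $h\in\mathcal{H}$ as $h(z)=\mathcal{L}_{VI-EDL}(\boldsymbol{\alpha}(x),y)$, where $\boldsymbol{\alpha}(x)=f_\phi(x)+\boldsymbol{\lambda}$. The shift by the constant $\boldsymbol{\lambda}$ leaves Rademacher complexity unchanged. By Theorem \ref{tA.1}, every coordinate partial derivative of the loss is bounded by $L_h$. Hence the loss is $\sqrt{K}L_h$-Lipschitz in $\ell_2$, and also $L_h$-Lipschitz with respect to the $\ell_1$ norm of the output.

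To remove the loss I will use the vector-valued Ledoux--Talagrand contraction (Maurer's version):
\[
\mathbb{E}_\sigma\sup_h \frac1n\sum_i \sigma_i h(z_i)\le \sqrt2\,\sqrt{K}L_h\,\mathbb{E}\sup_\phi\frac1n\sum_{i,k}\sigma_{ik}f_{\phi,k}(x_i).
\]
This reduces the problem to the Rademacher complexity of the $K$-output network class.

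\textbf{Step 2 (peeling the network).} I will bound the reduced complexity by peeling the $L$ layers, in the spirit of Bartlett--Foster--Telgarsky and Golowich et al. At each layer, the positive-homogeneous, $L_{\sigma_l}$-Lipschitz activation is removed at the cost of a factor $L_{\sigma_l}$, and the weight matrix costs a factor $\|W_l\|_2$ through the operator-norm bound on the sup. After all layers are peeled, what remains is $\mathbb{E}\|\sum_i\sigma_i x_i\|_2\le R\sqrt n$ by Jensen and Assumption \ref{bounded_input}. This yields
\[
\frac{R\prod_{l<L}L_{\sigma_l}\prod_l\|W_l\|_2}{\sqrt n}
\]
per output coordinate. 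The output nonlinearity is Softplus or a cosine layer, and the paper's bound contains only $L-1$ activation factors. I will therefore treat it as either 1-Lipschitz or absorbed into $\mathcal{O}(\cdot)$.

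\textbf{Step 3 (bringing in $M$).} The factor $M$ comes from Assumption \ref{limited_evidence}. The evidence ranges over the bounded set $\|e\|_1\le M$, so I will restrict the loss to $\{\boldsymbol{\alpha}:\|\boldsymbol{\alpha}-\boldsymbol{\lambda}\|_1\le M\}$. On this set the centred loss satisfies $|h-h(\boldsymbol{\lambda})|\le L_hM$ by the $\ell_1$-Lipschitz property. Combining this amplitude with the normalized network complexity from Step 2 gives the stated product $L_h M\cdot R\sqrt K(\cdots)/\sqrt n$, which I will only claim up to constants. Expectation over $\mathcal{S}$ then follows trivially, since the bound is uniform in the sample.

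\textbf{Main obstacle.} The hard part is Step 3: justifying a multiplicative $M$ rather than merely a bounded range. A clean route is to rescale the output by $M$, writing $\boldsymbol{\alpha}=\boldsymbol{\lambda}+M\tilde e$ with $\|\tilde e\|_1\le1$. The loss is then $ML_h$-Lipschitz in $\tilde e$. The normalized network $\tilde e$ inherits the peeling bound if the spectral-norm product is interpreted for the normalized last layer. This bookkeeping is loose, but it is acceptable inside $\mathcal{O}(\cdot)$.
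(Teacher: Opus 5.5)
Your outline is the paper's own argument step for step. Like the paper, you strip the loss by contraction, peel the network at cost $L_{\sigma_l}\|W_l\|_2$ per layer, finish with $\hat{\mathfrak{R}}_n(\mathcal{X})\le R/\sqrt{n}$, and bring in $M$ through the rescaling $\mathcal{F}_M=M\mathcal{F}_1$. It therefore inherits the paper's failing step, which is your Step 2.

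A peeling inequality of the form $\mathbb{E}_\varepsilon\sup\|\sum_i\varepsilon_i\,\sigma_l(Wg(x_i))\|_2\le L_{\sigma_l}\|W\|_2\,\mathbb{E}_\varepsilon\sup\|\sum_i\varepsilon_i g(x_i)\|_2$ is false when only the spectral norm of $W$ is controlled. Take $x_i=e_i\in\mathbb{R}^n$ (so $R=1$), $g$ the identity, and ReLU activations. For a given sign vector $\varepsilon$, let $P\subseteq\{i:\varepsilon_i=1\}$ have size $p$, the largest power of two not exceeding $|\{i:\varepsilon_i=1\}|$. Let $W$ act as $H_p/\sqrt{p}$, with $H_p$ a Sylvester Hadamard matrix, on the coordinates in $P$ and as zero elsewhere, so $\|W\|_2=1$. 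Then $\sum_i\varepsilon_i\mathrm{ReLU}(Wx_i)=p^{-1/2}\mathrm{ReLU}(H_p)\mathbf{1}$. Its norm is at least $p/2$, because the first row of $H_p$ is all ones and every other row has $p/2$ ones. This is of order $n$, whereas $\|\sum_i\varepsilon_ix_i\|_2=\sqrt{n}$. Taking $w$ along that vector, the scalar class $x\mapsto w^\top\mathrm{ReLU}(Wx)$ with $\|w\|_2,\|W\|_2\le 1$ has empirical Rademacher complexity at least $1/8$ on this sample, not $O(1/\sqrt{n})$.

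The results you invoke do not give a pure spectral product. Golowich et al.\ peel with Frobenius norms, and Bartlett--Foster--Telgarsky carry the extra factor $\bigl(\sum_l(\|W_l^\top\|_{2,1}/\|W_l\|_2)^{2/3}\bigr)^{3/2}$ plus logarithms. So Step 2 cannot be repaired without changing the norms in the bound or paying a width-dependent factor.

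Steps 1 and 3 also do not deliver the displayed form, even if Step 2 is granted.
\begin{itemize}
\item \textbf{The factor $\sqrt{K}$.} Your Step 1 is more careful than the paper's, which applies the scalar Lemma~\ref{lA.1} to an $\mathbb{R}^K$-valued class. But with Maurer's inequality at the $\ell_2$ constant $\sqrt{K}L_h$, the right-hand side sums over the $K$ outputs. Your per-coordinate bound then yields $\sqrt{2}\,K^{3/2}L_h(\cdots)$, not $\sqrt{K}L_h(\cdots)$.
\item \textbf{The factor $M$.} Under your rescaling $\boldsymbol{\alpha}=\boldsymbol{\lambda}+M\tilde e$, the factor $M$ cancels. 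The normalized network has last layer $W_L/M$, so $ML_h\cdot\|W_L/M\|_2(\cdots)=L_h\|W_L\|_2(\cdots)$. This is exactly the $\sqrt{K}/M$ in the paper's peeling display, which the paper then drops.
\end{itemize}
What the argument actually gives is therefore free of $M$. The stated $L_hM$ product follows from it only as a weakening when $M$ is bounded below. It is not ``loose bookkeeping inside $\mathcal{O}(\cdot)$'': $M=\|\boldsymbol{\lambda}\|_1(1/\mu_{min}-1)$ is a tracked parameter whose dependence on $\mu_{min}$ the paper interprets in Insight 3. For $M<1$, the display claims more than the argument proves.
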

\begin{proof}
To bound the expected Rademacher complexity of the hypothesis space $\mathcal{H}$, we rely on a fundamental theorem in statistical learning theory that allows us to iteratively peel off Lipschitz-continuous operations. We first formally introduce this lemma.

\begin{lemma}
\label{lA.1}
Let $\phi: \mathbb{R} \to \mathbb{R}$ be an $L_\phi$-Lipschitz continuous function such that $\phi(0) = 0$. For any hypothesis class $\mathcal{F}$ of real-valued functions and any sample set $S$, the empirical Rademacher complexity satisfies:
\begin{equation}
    \hat{\mathfrak{R}}_n(\phi \circ \mathcal{F}) \le L_\phi \cdot \hat{\mathfrak{R}}_n(\mathcal{F}).
\end{equation}
\end{lemma}

\textbf{Step 1: Decoupling the Loss Function.} The hypothesis space $\mathcal{H}$ consists of functions $h(z) = \mathcal{L}_{VI}(f_\phi(x), y)$. Structurally, this is a composite function of the variational loss $\mathcal{L}_{VI}$ and the neural network output $f_\phi(x)$. 

According to Theorem \ref{tA.1}, the variational loss function is $L_h$-Lipschitz continuous with respect to the network's evidence output. By directly applying Lemma \ref{lA.1} to this composite structure, we can elegantly decouple the loss function from the hypothesis space, thereby bounding its complexity strictly by the complexity of the network output space:
\begin{equation}
    \hat{\mathfrak{R}}_n(\mathcal{H}) \le L_h \cdot \hat{\mathfrak{R}}_n(\mathcal{F}_M),
\end{equation}
where $\mathcal{F}_M = \{ x \mapsto e \in \mathbb{R}^K_{\ge 0} \mid e = f_\phi(x), ||e||_1 \le M \}$ is the neural network function family restricted by the evidence capacity constraint (Assumption \ref{limited_evidence}). To explicitly extract the capacity bound $M$, we define a normalized "unit-evidence" function family $\mathcal{F}_1 = \{ x \mapsto \tilde{e} \mid \tilde{e} = f_\phi(x)/M, ||\tilde{e}||_1 \le 1 \}$. By utilizing the absolute homogeneity property of Rademacher complexity, we scale the complexity linearly: $\hat{\mathfrak{R}}_n(\mathcal{F}_M) = M \cdot \hat{\mathfrak{R}}_n(\mathcal{F}_1)$.

\textbf{Step 2: Structural Unrolling of the Neural Network.} The evidence extractor $f_\phi(x)$ is an $L$-layer feed-forward neural network. Mathematically, it can be expressed as a nested composition of linear transformations and non-linear activations:
\begin{equation}
    f_\phi(x) = W_L \sigma_{L-1} \left( W_{L-1} \sigma_{L-2} \left( \dots \sigma_1(W_1 x) \dots \right) \right),
\end{equation}
where $W_l$ represents the weight matrix of the $l$-th linear layer, and $\sigma_l$ represents the element-wise non-linear activation function.

Each fundamental operation in this nested structure preserves Lipschitz continuity, allowing us to apply Lemma A.1 recursively:
\begin{itemize}
    \item \textbf{Linear Layers:} The mapping $x \mapsto W_l x$ is inherently Lipschitz continuous, with its strict Lipschitz constant given by the spectral norm of the weight matrix, i.e., $||W_l||_2$.
    \item \textbf{Activation Functions:} The widely-used activation functions in modern neural networks are intrinsically Lipschitz continuous. For instance, the standard ReLU function $\sigma(z) = \max(0, z)$ has a Lipschitz constant of $L_{\sigma} = 1$. Similarly, the Tanh function has $L_{\sigma} = 1$, and the Sigmoid function is bounded by $L_{\sigma} = 1/4$, which can be derived from their derivatives. Specially, if a layer does not have an activation function, it can also be regarded as $\sigma(x)=x$ with $L_{\sigma}=1$. So we denote the specific Lipschitz constant of the $l$-th activation as $L_{\sigma_l}$.
\end{itemize}

Since the network output is a $K$-dimensional vector, applying the Ledoux-Talagrand inequality along with the standard vector-valued Rademacher bound introduces a dimension factor of $\sqrt{K}$. By recursively peeling off the output weight matrix $W_L$, the last activation $\sigma_{L-1}$, and continuing down to the input layer, the complexity of the normalized network is strictly bounded by the raw input space $\mathcal{X}$:
\begin{equation}
\begin{aligned}
    \hat{\mathfrak{R}}_n(\mathcal{F}_1) &\le \frac{\sqrt{K}}{M} \cdot ||W_L||_2 \cdot \hat{\mathfrak{R}}_n\left( \sigma_{L-1} \circ W_{L-1} \dots \sigma_1 \circ W_1 \mathcal{X} \right) \\
    &\le \frac{\sqrt{K}}{M} \cdot ||W_L||_2 \cdot L_{\sigma_{L-1}} \cdot \hat{\mathfrak{R}}_n\left( W_{L-1} \dots \sigma_1 \circ W_1 \mathcal{X} \right) \\
    &\quad \vdots \\
    &\le \frac{\sqrt{K}}{M} \left( \prod_{l=1}^{L-1} L_{\sigma_l} \right) \left( \prod_{l=1}^L ||W_l||_2 \right) \hat{\mathfrak{R}}_n(\mathcal{X}).
\end{aligned}
\end{equation}

\textbf{Step 3: Bounding the Base Input Space.} Finally, we must bound the base complexity of the raw input features $\hat{\mathfrak{R}}_n(\mathcal{X})$. Recall Assumption \ref{bounded_input}, which restricts the input space to a bounded ball $||x||_2 \le R$. Utilizing the Cauchy-Schwarz inequality and Jensen's inequality, we obtain:
\begin{equation}
\begin{aligned}
    \hat{\mathfrak{R}}_n(\mathcal{X}) &= \frac{1}{n} \mathbb{E}_\sigma \left[ \sup_{||x||_2 \le R} \sum_{i=1}^n \sigma_i x_i \right] 
    \le \frac{R}{n} \mathbb{E}_\sigma \left[ \left|\left| \sum_{i=1}^n \sigma_i x_i \right|\right|_2 \right] \le \frac{R}{n} \sqrt{ \mathbb{E}_\sigma \left[ \left|\left| \sum_{i=1}^n \sigma_i x_i \right|\right|_2^2 \right] }.
\end{aligned}
\end{equation}
Since the Rademacher variables $\sigma_i \in \{-1, 1\}$ are independent and zero-mean, the cross terms in the squared norm mathematically vanish (i.e., $\mathbb{E}[\sigma_i \sigma_j] = 0$ for $i \neq j$), leaving only $\sum_{i=1}^n ||x_i||_2^2 \le nR^2$. Thus, the input complexity is tightly bounded by:
\begin{equation}
    \hat{\mathfrak{R}}_n(\mathcal{X}) \le \frac{R}{n} \sqrt{nR^2} = \frac{R}{\sqrt{n}}.
\end{equation}

\textbf{Conclusion.} Substituting the base input bound back into the unrolled network bound, and multiplying by the loss Lipschitz constant and capacity bound ($L_h \cdot M$), we arrive at the overall expected Rademacher complexity bound:
\begin{equation}
\begin{aligned}
    \mathbb{E}_\mathcal{S}[\hat{\mathfrak{R}}_n(\mathcal{H})] \le& \mathcal{O}\bigg( L_h \cdot M \cdot \frac{R \sqrt{K} \left( \prod_{l=1}^{L-1} L_{\sigma_l} \right) \left( \prod_{l=1}^L ||W_l||_2 \right)}{\sqrt{n}} \bigg).
\end{aligned}
\end{equation}
This establishes the analytical upper bound presented in Theorem \ref{tA.2}, which ends the proof of Theorem \ref{tA.2}.
\end{proof}

To bridge the gap between empirical observations and true distributions, we introduce the generalization theorem based on concentration inequalities. Assuming the loss function is globally bounded by $h(z) \in [0, B]$, we apply McDiarmid's Inequality to tightly concentrate the empirical Rademacher complexity around its expectation.

\begin{theorem}
\label{tA.3}
The expected true risk $\mathcal{L}_{true}(\phi)$ for any hypothesis $h \in \mathcal{H}$ is bounded by the empirical risk $\mathcal{L}_{emp}(\phi)$ and the expected Rademacher complexity $\mathbb{E}_\mathcal{S}[\hat{\mathfrak{R}}_n(\mathcal{H})]$ with probability at least $1 - \delta$:
\begin{equation}
    \mathcal{L}_{true}(\phi) \le \mathcal{L}_{emp}(\phi) + 2\mathbb{E}_\mathcal{S}[\hat{\mathfrak{R}}_n(\mathcal{H})] + 3B\sqrt{\frac{\log(2/\delta)}{2n}}.
\end{equation}
\end{theorem}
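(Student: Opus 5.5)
The plan is the standard symmetrization-plus-concentration argument for a bounded loss class, carried out in three steps.

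\textbf{Step 1: concentration of the worst-case deviation.} Set
\[
\Phi(\mathcal{S}) = \sup_{h\in\mathcal{H}} \Big(\mathbb{E}_{z\sim\mathcal{D}}[h(z)] - \tfrac{1}{n}\sum_{i=1}^n h(z_i)\Big),
\]
so that $\mathcal{L}_{true}(\phi)-\mathcal{L}_{emp}(\phi)\le \Phi(\mathcal{S})$ holds for the learned hypothesis. Every $h$ takes values in $[0,B]$, so replacing one sample $z_i$ by $z_i'$ changes $\Phi$ by at most $B/n$. McDiarmid's inequality then gives, with probability at least $1-\delta/2$,
\[
\Phi(\mathcal{S})\le \mathbb{E}_\mathcal{S}[\Phi(\mathcal{S})] + B\sqrt{\log(2/\delta)/(2n)}.
\]

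\textbf{Step 2: symmetrization.} I will bound $\mathbb{E}_\mathcal{S}[\Phi(\mathcal{S})]$ by the usual ghost-sample argument. Introduce an independent copy $\mathcal{S}'=\{z_1',\dots,z_n'\}$ and write the population mean as $\mathbb{E}_{\mathcal{S}'}$ of the empirical mean on $\mathcal{S}'$. Pull the supremum inside that expectation using Jensen's inequality. Because $z_i$ and $z_i'$ are exchangeable, multiplying each difference $h(z_i')-h(z_i)$ by an independent Rademacher sign $\sigma_i$ does not change the distribution. Splitting the supremum over the two samples then yields $\mathbb{E}_\mathcal{S}[\Phi]\le 2\,\mathbb{E}_\mathcal{S}[\hat{\mathfrak{R}}_n(\mathcal{H})]$. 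At this point the bound already has the form $\mathcal{L}_{emp}+2\mathbb{E}[\hat{\mathfrak{R}}_n]+B\sqrt{\log(2/\delta)/(2n)}$, with probability at least $1-\delta/2$.

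\textbf{Step 3: matching the constant $3B$.} The statement's constant $3B$ is the one that normally appears when the expected Rademacher complexity is replaced by the empirical one. To match it, I will apply McDiarmid a second time, to $\hat{\mathfrak{R}}_n(\mathcal{H})$ as a function of $\mathcal{S}$; its bounded differences are again at most $B/n$. This gives $\mathbb{E}_\mathcal{S}[\hat{\mathfrak{R}}_n]\le\hat{\mathfrak{R}}_n+B\sqrt{\log(2/\delta)/(2n)}$ with probability at least $1-\delta/2$. A union bound over the two events gives total failure probability at most $\delta$. Combining them yields the empirical-complexity form $2\hat{\mathfrak{R}}_n+3B\sqrt{\cdot}$. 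The stated inequality, with the expectation $2\mathbb{E}_\mathcal{S}[\hat{\mathfrak{R}}_n]$ in place of $2\hat{\mathfrak{R}}_n$, is weaker than this combined bound and still valid, since the Step 2 bound alone already implies it with the tighter constant $B$.

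\textbf{Main obstacle.} The only delicate part is bookkeeping: splitting the confidence budget as $\delta/2+\delta/2$ so the logarithm reads $\log(2/\delta)$, and stating clearly that the $B/n$ bounded-difference property follows from the assumption $h\in[0,B]$. The symmetrization step is routine. Feeding the result of Theorem~\ref{tA.2} into this bound then produces Theorem~\ref{t4.1}, using $L_h\cdot M$ with $M=\|\boldsymbol{\lambda}\|_1(1/\mu_{min}-1)$; the factor $2$ and the term $\beta/\|\boldsymbol{\lambda}\|_1$ are absorbed into the $\mathcal{O}(\cdot)$.
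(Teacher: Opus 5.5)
Your proof is correct, but it is organized differently from the paper's. You prove the expected-complexity bound directly. McDiarmid on the uniform deviation $\Phi(\mathcal{S})$ plus ghost-sample symmetrization gives $\mathcal{L}_{true}(\phi)\le\mathcal{L}_{emp}(\phi)+2\mathbb{E}_{\mathcal{S}}[\hat{\mathfrak{R}}_n(\mathcal{H})]+B\sqrt{\log(2/\delta)/(2n)}$ with probability at least $1-\delta/2$. The stated $3B$ version follows immediately.

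The paper never concentrates $\Phi$. It quotes as ``standard'' a bound with the \emph{empirical} complexity $2\hat{\mathfrak{R}}_n(\mathcal{H})$ and slack $B\sqrt{\log(2/\delta)/(2n)}$. It then applies McDiarmid to $\hat{\mathfrak{R}}_n(\mathcal{H})$ in the direction $\hat{\mathfrak{R}}_n(\mathcal{H})\le\mathbb{E}_{\mathcal{S}}[\hat{\mathfrak{R}}_n(\mathcal{H})]+B\sqrt{\log(2/\delta)/(2n)}$ and takes a union bound. The doubled correction plus the original $B$ produces $3B$.

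Your route is self-contained, uses only one concentration step, and gives a sharper constant. It also avoids a weak point in the paper's argument: the quoted starting inequality is not the textbook result. The textbook empirical-complexity version carries $3B\sqrt{\log(2/\delta)/(2n)}$, obtained by concentrating $\hat{\mathfrak{R}}_n$ in the opposite direction, exactly as in your Step 3. The version with slack $B$ is the expected-complexity bound, which your Steps 1--2 prove and which already makes the paper's McDiarmid step unnecessary.

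Your Step 3 contributes nothing to the statement and can be dropped. Its claim that the stated inequality is ``weaker than this combined bound'' is also not a valid comparison. The quantities $\hat{\mathfrak{R}}_n(\mathcal{H})$ and $\mathbb{E}_{\mathcal{S}}[\hat{\mathfrak{R}}_n(\mathcal{H})]$ are not ordered samplewise. Validity rests entirely on Steps 1--2, as your final clause correctly says.
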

\begin{proof}
According to the standard Rademacher generalization theorem, with probability at least $1 - \delta/2$, the true expected risk is bounded by:
\begin{equation}
    \mathcal{L}_{true}(\phi) \le \mathcal{L}_{emp}(\phi) + 2\hat{\mathfrak{R}}_n(\mathcal{H}) + B\sqrt{\frac{\log(2/\delta)}{2n}}.
\end{equation}
To bridge the gap between the empirical complexity $\hat{\mathfrak{R}}_n(\mathcal{H})$ and its expectation $\mathbb{E}_\mathcal{S}[\hat{\mathfrak{R}}_n(\mathcal{H})]$, we apply McDiarmid's Inequality. Since the loss function is globally bounded by $B$, altering a single sample $z_i$ changes $\hat{\mathfrak{R}}_n(\mathcal{H})$ by at most $B/n$. By the Bounded Differences Inequality, with probability at least $1 - \delta/2$, we have:
\begin{equation}
    \hat{\mathfrak{R}}_n(\mathcal{H}) \le \mathbb{E}_\mathcal{S}[\hat{\mathfrak{R}}_n(\mathcal{H})] + B\sqrt{\frac{\log(2/\delta)}{2n}}.
\end{equation}
(Note: The logarithmic term structurally absorbs the union bound splitting constants for simplicity in asymptotic notation). Substituting this concentration inequality into the standard generalization theorem via the union bound, we establish the bound valid with probability at least $1 - \delta$:
\begin{equation}
\begin{aligned}
    \mathcal{L}_{true}(\phi) \le& \mathcal{L}_{emp}(\phi) + 2 \left( \mathbb{E}_S[\hat{\mathfrak{R}}_n(\mathcal{H})] + B\sqrt{\frac{\log(2/\delta)}{2n}} \right) + B\sqrt{\frac{\log(2/\delta)}{2n}} \\
    =& \mathcal{L}_{emp}(\phi) + 2\mathbb{E}_\mathcal{S}[\hat{\mathfrak{R}}_n(\mathcal{H})] + 3B\sqrt{\frac{\log(2/\delta)}{2n}},
\end{aligned}
\end{equation}
which ends the proof of Theorem \ref{tA.3}.
\end{proof}

Finally, to prove Theorem \ref{t4.1}, we explicitly synthesize the intermediate results by substituting the exact expressions into the concentration-based framework.

\begin{proof}

\textbf{Step 1: Substitute the Expected Complexity Bound.} 
Recall the concentration-based generalization bound from Theorem \ref{tA.3}:
\begin{equation}
    \mathcal{L}_{true}(\phi) \le \mathcal{L}_{emp}(\phi) + 2\mathbb{E}_\mathcal{S}[\hat{\mathfrak{R}}_n(\mathcal{H})] + 3B\sqrt{\frac{\log(2/\delta)}{2n}}.
\end{equation}
By substituting the expected Rademacher complexity bound derived in Theorem \ref{tA.2} into the above inequality, we obtain the intermediate structural bound:
\begin{equation}
\begin{aligned}
    \mathcal{L}_{true}(\phi)  \le &\mathcal{L}_{emp}(\phi) + \mathcal{O}\bigg(L_h \cdot M \cdot \frac{R \sqrt{K} \left( \prod_{l=1}^{L-1} L_{\sigma_l} \right) \left( \prod_{l=1}^L ||W_l||_2 \right)}{\sqrt{n}} \bigg)+ 3B\sqrt{\frac{\log(2/\delta)}{2n}}.
\end{aligned}
\end{equation}
(Note: The constant coefficient $2$ is naturally absorbed by the $\mathcal{O}(\cdot)$ notation).

\textbf{Step 2: Explicit Algebraic Expansion of $L_h \cdot M$.} 
The core of the proof lies in explicitly expanding the product of the Lipschitz constant $L_h$ and the evidence capacity $M$. Recall the formulations from Theorem \ref{tA.1} and Assumption \ref{limited_evidence}:
\begin{equation}
\begin{aligned}
    L_h =& \left( 2 + \frac{1}{(K+1)^2} + \frac{2}{K(K+1)} + 2\beta + \frac{\beta}{\min_j \lambda_j} \right) + \frac{\beta}{||\boldsymbol{\lambda}||_1}, \\
    M =& ||\boldsymbol{\lambda}||_1 \left( \frac{1}{\mu_{min}} - 1 \right).
\end{aligned}
\end{equation}
For clarity, let $C = 2 + \frac{1}{(K+1)^2} + \frac{2}{K(K+1)} + 2\beta + \frac{\beta}{\min_j \lambda_j}$, which serves as a constant scalar dependent on the label space dimensionality and fixed parameters. Thus, $L_h$ can be compactly rewritten as $L_h = C + \frac{\beta}{||\boldsymbol{\lambda}||_1}$. 

Now, we multiply $L_h$ and $M$ directly:
\begin{equation}
\begin{aligned}
    L_h \cdot M &= \left( C + \frac{\beta}{||\boldsymbol{\lambda}||_1} \right) \cdot \left[ ||\boldsymbol{\lambda}||_1 \left( \frac{1}{\mu_{min}} - 1 \right) \right] \\
    &= \left( C \cdot ||\boldsymbol{\lambda}||_1 + \frac{\beta}{||\boldsymbol{\lambda}||_1} \cdot ||\boldsymbol{\lambda}||_1 \right) \times \left( \frac{1}{\mu_{min}} - 1 \right).
\end{aligned}
\end{equation}

The prior capacity term $||\boldsymbol{\lambda}||_1$ cancels in the second term, strictly bounding the structural product to a linear growth function of $||\boldsymbol{\lambda}||_1$:
\begin{equation}
    L_h \cdot M = \left( C \cdot ||\boldsymbol{\lambda}||_1 + \beta \right) \times \left( \frac{1}{\mu_{min}} - 1 \right).
\end{equation}

\textbf{Step 3: Final Synthesis.} 
Finally, substituting the expanded exact expression of $(L_h \cdot M)$ back into the intermediate structural bound in Step 1, we arrive at the grand unified generalization bound:
\begin{equation}
\small
\begin{aligned}
    \mathcal{L}_{true}(\phi) \le& \mathcal{L}_{emp}(\phi) + \mathcal{O}\Bigg( \Bigg[ ( 2 + \frac{1}{(K+1)^2} + \frac{2}{K(K+1)} + 2\beta + \frac{\beta}{\min_j \lambda_j} ) ||\boldsymbol{\lambda}||_1 + \beta \bigg]\times  \left( \frac{1}{\mu_{min}} - 1 \right) \\
    &\times \frac{R \sqrt{K} \left( \prod_{l=1}^{L-1} L_{\sigma_l} \right) \left( \prod_{l=1}^L ||W_l||_2 \right)}{\sqrt{n}} \Bigg)+ 3B\sqrt{\frac{\log(2/\delta)}{2n}}.
\end{aligned}
\end{equation}
This completes the formal derivation presented in Theorem \ref{t4.1}.
\end{proof}

\end{document}